\newif\ifonecolumn 
\newcommand{\jfnote}[1]{\ifthenelse{\boolean{include-notes}}%
    {\textcolor{blue}{\textbf{[JF: #1]}}}{}}
\newcommand{\new}[1]{\ifthenelse{\boolean{show-new}}{%
    \ifthenelse{\boolean{mark-new}}%
        {\textcolor{blue}{#1}}
        {#1}
    }{}
        }
\newcommand{\entryneedsurl}[1]{\addtocategory{needsurl}{#1}}
\newcommand{\B}{\mathcal{B}}
\newcommand{\C}{\mathcal{C}}
\newcommand{\D}{\mathcal{D}}
\newcommand{\K}{\mathcal{K}}
\renewcommand{\L}{\mathcal{L}}
\newcommand{\M}{\mathcal{M}}
\newcommand{\N}{\mathcal{N}}
\newcommand{\Q}{\mathcal{Q}}
\newcommand{\U}{\mathcal{U}}
\newcommand{\V}{\mathcal{V}}
\newcommand{\DD}{\mathfrak{D}}
\newcommand{\RR}{\mathbb{R}}
\newcommand{\UU}{\mathfrak{U}}
\newcommand{\XX}{\mathbb{X}}
\newcommand{\fixwidth}[1]{#1} 
\newcommand{\fixwidth}[1]{\resizebox{\columnwidth}{!}{\ensuremath{\displaystyle{#1}}}} 
\newcommand{\Dx}{\hat\D(x)}
\newcommand{\bu}{\bm{u}}
\newcommand{\bdelta}{\bm{d}}
\newcommand{\bbeta}{\bm{\beta}}
\newcommand{\bx}{\xi}
\newcommand{\Disc}{\text{Disc}}
\DeclareMathOperator{\interior}{int}
\newtheorem{definition}{Definition}
\newtheorem{corollary}{Corollary}
\newtheorem{remark}{Remark}
\newtheorem{proposition}{Proposition}
\newtheorem{assumption}{Assumption}
\title{
A General Safety Framework for Learning-Based Control in Uncertain Robotic Systems
}
\author{
Jaime F. Fisac\textsuperscript{$*$1}, \and Anayo K. Akametalu\textsuperscript{$*$1}, \and Melanie N. Zeilinger\textsuperscript{2},\\ \and Shahab Kaynama\textsuperscript{3}, \and Jeremy Gillula\textsuperscript{4}, and \and Claire J. Tomlin\textsuperscript{1}
\thanks{
    \textsuperscript{$*$} The first two authors contributed equally. \newline\indent
    This work was supported by the NSF CPS project ActionWebs under grant 0931843, NSF CPS project FORCES under grant 1239166, and by ONR under the HUNT, SMARTS and Embedded Humans MURIs, and by AFOSR under the CHASE MURI. The research of J. F. Fisac received funding from the ``la Caixa" Foundation. The research of A.K. Akametalu received funding from the NSF Bridge to Doctorate program. The research of M.N. Zeilinger received funding from the EU FP7 (FP7/2007-2013) under grant PIOFGA-2011-301436-ÒCOGENTÓ.}
\thanks{
    \hspace{-1.5em}\
    \textsuperscript{1} Department of Electrical Engineering and Computer Sciences, 
        University of California, Berkeley.
        Cory Hall, Berkeley, CA 94720, United States.\newline\indent
    \textsuperscript{2} Department of Mechanical and Process Engineering, ETH Zurich.
    R{\"a}mistrasse 101, 8092 Z{\"u}rich, Switzerland.\newline\indent
    \textsuperscript{3} Clearpath Robotics.
    1425 Strasburg Rd, Suite 2A, Kitchener, ON N2R 1H2, Canada.\newline\indent
    \textsuperscript{4} Electronic Frontier Foundation.
    815 Eddy St, San Francisco, CA 94109, United States.\newline\indent
    Email: {\tt\small \{jfisac, kakametalu, tomlin\}~@eecs.berkeley.edu, mzeilinger@ethz.ch, skaynama@clearpath.ai, jeremy@eff.org }}%
}
\begin{document}
\maketitle
\thispagestyle{empty}
\pagestyle{empty}

\begin{abstract}

The proven efficacy of learning-based control schemes strongly motivates their application to 
robotic systems operating in the physical world.
However, guaranteeing correct operation during the learning process is currently an unresolved issue, which is of vital importance in safety-critical systems.
We propose a general safety framework based on Hamilton-Jacobi reachability methods that can work in conjunction with an arbitrary learning algorithm.
The method exploits approximate knowledge of the system dynamics to guarantee constraint satisfaction while minimally interfering with the learning process.
We further introduce a Bayesian mechanism that refines the safety analysis as the system acquires new evidence, reducing initial conservativeness when appropriate while strengthening guarantees through real-time validation.
The result is a least-restrictive, safety-preserving control law that intervenes only when (a) the computed safety guarantees require it, or (b)~confidence in the computed guarantees decays in light of new observations.
We prove theoretical safety guarantees combining probabilistic and worst-case analysis and demonstrate the proposed framework experimentally on a quadrotor vehicle. Even though safety analysis is based on a simple point-mass model, the quadrotor successfully arrives at a suitable controller by policy-gradient reinforcement learning without ever crashing, and safely retracts away from a strong external disturbance introduced during flight.

\end{abstract}

\section{Introduction}\label{introduction}

\begin{figure}
\begin{center}
\includegraphics[
 width=.48\textwidth]{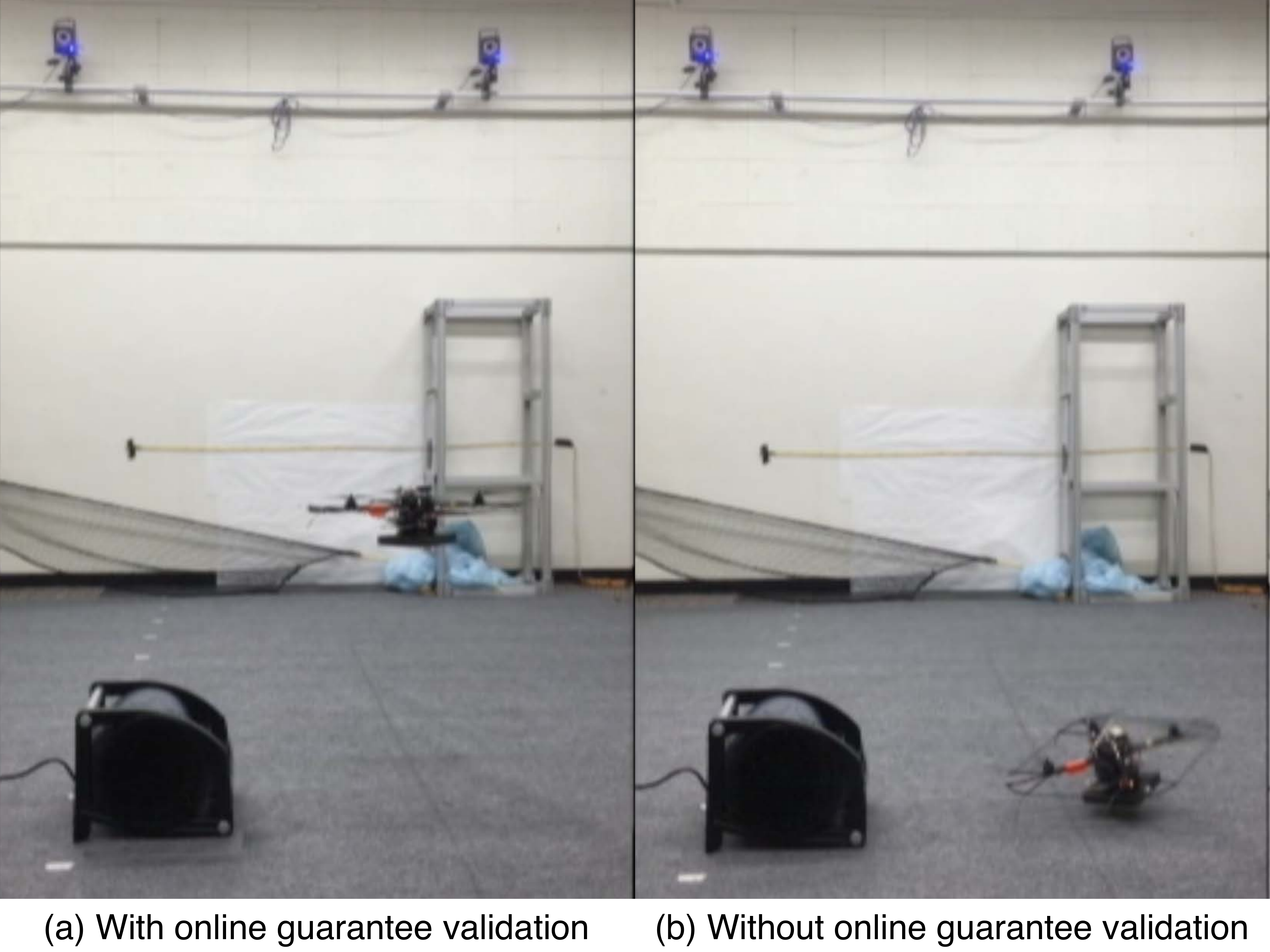}
\caption{Hummingbird quadrotor learning a vertical flight policy under the requirement of not colliding.
When the fan is turned on, the system experiences an unmodeled disturbance that it has not previously encountered.
This can lead to a ground collision even under robust safety policies
(right).
The proposed Bayesian validation method
detects the inconsistency and prevents the vehicle from entering the uncertain region~(left).
\newline Video:
{\tt\href{http://eecs.berkeley.edu/~jfisac/safelearning}{\nolinkurl{www.eecs.berkeley.edu/~jfisac/safelearning}}}
\vspace{-10pt} 
\label{fig:quad}}
\end{center}
\end{figure}

Learning-based methods in control and artificial intelligence are generating a considerable amount of excitement in the research community.
The auspicious results of deep reinforcement learning schemes in virtual environments such as arcade videogames \cite{Mnih2015a} and physics simulators \cite{Schulman2015}, make these techniques extremely attractive for robotics applications, in which complex dynamics and hard-to-model environments limit the effectiveness of purely model-based approaches.
However, the difficulty of interpreting the inner workings of many machine learning algorithms (notably in the case of deep neural networks), makes it challenging to make meaningful statements about the behavior of a system during the learning process,
especially while the system has not yet converged to a suitable control policy.
While this may not be a critical issue in a simulated reality, it can quickly become a limiting factor when attempting to put such an algorithm in control of a system in the physical world,
where certain failures, such as collisions, can result in damage that would severely hinder or even terminate the learning process, in addition to material loss or human injury. 
We refer to systems in which certain failure states are unacceptable as \emph{safety-critical}.

In the last decade, learning-based control schemes have been successfully demonstrated in robotics applications in which the safety-critical aspects were effectively removed or mitigated, typically by providing a manual fallback mechanism or retrofitting the environment to allow safe failure. 
In \cite{Abbeel2007},~\cite{Coates2008} a trained pilot was able to remotely take over control of the autonomous helicopter at any time; the power slide car maneuvers in \cite{Kolter2010} were performed on an empty test track; and the aerobatic quadrotor in \cite{Lupashin2010} was enclosed in a safety net. While mostly effective, these \emph{ad hoc} methods tend to come with their own issues
(pilot handoffs, for instance, are notoriously prone to result in accidents \cite{Hobbs2010})
and do not generalize well beyond the context of the particular demonstration.
It therefore seems imperative to develop principled and provably correct approaches to safety,
attuned to the exploration-intense needs of learning-based algorithms,
that can be built into the autonomous operation of learning robotic systems.

Current efforts in policy transfer learning propose training an initial control policy in simulation and then carrying it over to the physical system \cite{Christiano2016}.
While progress in this direction is likely to reduce overall training time,
it does not eliminate the risk of catastrophic system misbehavior.
State-of-the art neural network policies have been shown to be vulnerable to small changes between training and testing conditions \cite{Huang2017}, which inevitably arise between simulated and real systems. Guaranteeing correct behavior of simulation-trained schemes in the real world thus remains an important unsolved problem.

Providing guarantees about a system's evolution inevitably requires some form of knowledge about the causal mechanisms that govern it.
Fortunately, in practice it is never the case that the designer of a robotic system has no knowledge whatsoever of its dynamics:
making use of approximate knowledge
is both possible and, we argue, advantageous for safety.
Yet, perfect knowledge of the dynamics can hardly if ever be safely assumed either.
This
motivates searching for points of rapprochement between data-driven and model-based techniques.

We identify three key properties that we believe any general safe learning framework should satisfy:
\begin{itemize}
\item \textbf{\emph{High confidence}}. The framework should be able to keep the system safe with high probability given the available knowledge about the system and the environment.
\item \textbf{\emph{Modularity}}. The framework should work in conjunction with an arbitrary learning-based control algorithm, without requiring modifications to said algorithm.
\item \textbf{\emph{Minimal intervention}}. The framework should not interfere with the learning process unless deemed strictly necessary to ensure safety, and should return control to the learning algorithm as soon as possible.
\end{itemize}

We can use these criteria to evaluate the strengths and shortcomings of existing approaches to safety in intelligent systems, and place our work in the context of prior research.

\subsection*{Related Work}
Early proposals of safe learning date back to the turn of the century. Lyapunov-based reinforcement learning \cite{Perkins2003} allowed a learning agent to switch between a number of pre-computed ``base-level" controllers with desirable safety and performance properties; this enabled solid theoretical guarantees at the expense of substantially constraining the agent's behavior; in a similar spirit, later work has considered constraining policy search to the space of stabilizing controllers \cite{Roberts2011}.

In risk-sensitive reinforcement learning \cite{Geibel2005}, the expected return was heuristically weighted with the probability (risk) of reaching an ``error state"; while this allowed for more general learning strategies, no guarantees could be derived from the heuristic effort. Nonetheless, the ideal problem formulation proposed in the paper, to maximize performance subject to some maximum allowable risk, inspired later work (see \cite{Garcia2015a} for a survey) and is very much aligned with our own goals.

More recently, \cite{Moldovan2012} proposed an ergodicity-based safe exploration policy for Markov decision processes (MDPs) with uncertain transition measures, which imposed a constraint on the probability, under the current belief, of being able to return to the starting state.
While
practical online methods for updating the system's belief on the transition dynamics are not discussed,
and the toy grid-world demonstrations fall short of capturing
the criticality of dynamics in many real-world safety problems,
the probabilistic safety analysis is extremely powerful, and our work certainly takes inspiration from it.
\new{%
Recent safe exploration efforts in robotics concurrent with our work
use Gaussian processes to model uncertain dynamics, but
restrict safety analysis to local stability (i.e. region of attraction)
and do not consider state constraints
\cite{Berkenkamp2016}, \cite{Berkenkamp2017}.%
}%

The robust model-predictive control approach in \cite{Aswani2013} learns about system dynamics for performance only, while enforcing constraints based on a robust nominal model. The method was successfully demonstrated on problems with nontrivial dynamics, including quadrotor flight.
However,
using an \emph{a priori} model for safety at best
constrains the system's ability to explore,
and at worst may fail to keep the real system safe.

To explicitly account for model uncertainty, the safety problem can be studied as a differential game \cite{Mitchell2005}, in which the controller must keep the system within the specified state constraints
(i.e. away from failure states)
in spite of the actions of an adversarial disturbance:
the optimal solution to this reachability game, obtainable through Hamilton-Jacobi methods \cite{Mitchell2005a},~\cite{Osher2003}, has been used to guarantee safety in a variety of engineering problems \cite{Ding2011},~\cite{
Chen2015},~\cite{Herbert2017}.
This robust, worst-case analysis determines a safe region in the state space and a control policy to remain inside it;
a related approach involves ensuring invariance through ``barrier functions" \cite{Prajna2004a},~\cite{Sloth2012}.
A key advantage is that in the interior of this \emph{safe set} one can execute any desired action, as long as the safe control is applied at the boundary: in this sense, the technique yields a \emph{least-restrictive} control law,
which naturally lends itself to minimally constrained learning-based control. Initial work exploring this was presented in \cite{Gillula2012},~\cite{Gillula2012a}.

The above methods are subject, however, to the fundamental limitation of any model-based safety analysis, namely, the contingency of all guarantees upon the validity of the model.
This faces designers with a difficult tradeoff.
On the one hand, if, in order to guarantee safety under large or poorly understood uncertainty, they assume conservative bounds on model error, this will reduce the computed safe set, and thereby restrict the freedom of the learning algorithm. If, on the other hand, the assumed bounds fail to fully capture the true evolution of the state, the theoretical guarantees derived from the model may not in fact apply to the real system.

\subsection*{Contribution}

In this work we propose a novel general safety framework that combines model-based control-theoretical analysis with data-driven Bayesian inference to construct and maintain high-probability guarantees around an arbitrary learning-based control algorithm. Drawing on Hamilton-Jacobi robust optimal control techniques, it defines a \emph{least-restrictive} supervisory control law, which allows the system to freely execute its learning-based policy almost everywhere, but imposes a computed action at states where it is deemed critical for safety. The safety analysis is refined through Bayesian inference in light of newly gathered evidence, both avoiding excessive conservativeness and improving reliability by rapidly imposing the computed safe actions if confidence in model-based guarantees decreases due to unexpected observations.

This paper
consolidates the preliminary work presented in \cite{Gillula2012},~\cite{Gillula2012a},~\cite{Akametalu2014}, extending the theoretical results to provide a unified treatment of model learning and guarantee validation, and presenting significant novel experimental results.
To our knowledge this is the first work in the area of reachability analysis that
reasons online about the validity of computed guarantees and uses a
resilient mechanism to continue exploiting them under inaccurate prior assumptions on model error.

Our framework relies on reachability analysis for the model-based safety guarantees, and on Gaussian processes for the online Bayesian analysis.
It is important to acknowledge that both of these techniques are computationally intensive and scale poorly with the dimensionality of the underlying continuous spaces, which can generally limit their applicability to complex dynamical systems.
However, recent compositional approaches have dramatically increased the tractability of lightly coupled high-dimensional systems%
~\cite{Kaynama2013},%
~\cite{Kaynama2015},%
~\cite{Chen2015},%
~\cite{Chen2016a},%
~while new analytic solutions entirely overcome the ``curse of dimensionality" in some relevant cases \cite{Darbon2016},~\cite{Kirchner2017}.
The key contribution of this work is in the principled methodology for incorporating safety into learning-based systems:
we thus focus our examples on problems of low dimensionality, implicitly bypassing the computational issues,
and note that our method can readily be used in conjunction with these decomposition techniques to extend its application to more complex systems.

We demonstrate our method on a quadrotor vehicle learning to track a vertical trajectory close to the ground (Fig. \ref{fig:quad}), using a policy gradient algorithm \cite{Kolter2009}.
The reliability of our method is evidenced under uninformative policy initializations, inaccurate safe set estimation and strong unmodeled disturbances.

The remainder of the paper is organized as follows: In Section \ref{sec:formulation} we
introduce the modeling framework and
formally state the safe learning problem. Section \ref{sec:analysis} presents the differential game analysis and derives some important properties. The proposed methodology is described in Section \ref{sec:solution} with the proofs of its fundamental guarantees, as well as a computationally tractable alternative with weaker, but practically useful, properties. Lastly, in Section \ref{sec:results} we present the experimental results.

\section{Problem Formulation \label{sec:formulation}}
\subsection{System Model and State-Dependent Uncertainty\label{subsec:dynamics}}

The analysis in this paper considers a fully observable system whose underlying dynamics are assumed deterministic, but only \emph{partially} known. This modeling framework can in practice be applied to a wide range of
systems for which an approximate dynamic model is available but the exact behavior is hard to model \emph{a priori} (due to manufacturing tolerances, aerodynamic effects, uncertain environments, etc.).

We can formalize this as a dynamical system with state $x\in\RR^n$, and two inputs, $u\in\U\subset\RR^{n_u},  d\in\D\subset\RR^{n_d}$
(with $\U$ and $\D$ compact)
which we will refer to as the \emph{controller} and the \emph{disturbance}:
\begin{equation}\label{fxud}
\dot{x} = f(x,u, d).
\end{equation}
In this context, however, 
$d$ is thought of as a deterministic state-dependent disturbance capturing unmodeled dynamics, given by an unknown Lipschitz function $d:\RR^n\to\D$. 
That is, we could in principle write the unknown dynamics as
$
F(x,u) = f\big(x,u,d(x)\big)
$. Unlike $F$, $f$ is a known function, with all uncertainty captured by $d(\cdot)$. 
The flow field $f: \RR^n \times \U \times \D\rightarrow\RR^n$ is assumed uniformly continuous and bounded,
as well as Lipschitz in $x$ and $ d$ for all $u$:
this ensures
that the unknown dynamics $F$ are Lipschitz in $x$.

Letting $\UU $ and $\DD$ denote the collections of measurable%
    \footnote{A function $f:X\to Y$ between two measurable spaces $(X,\Sigma_X)$ and $(Y,\Sigma_Y)$
    is said to be measurable if the preimage of a measurable set in $Y$ is a measurable set in $X$, that is:
    $\forall V\in\Sigma_Y, f^{-1}(V)\in\Sigma_X$, with $\Sigma_X,\Sigma_Y$ $\sigma$-algebras on $X$,$Y$.}
functions $\bm u: [0,\infty)\to \U $ and $\bm d: [0,\infty)\to \D$ respectively,
and allowing the controller and disturbance to choose any such signals,
the evolution of the system
from any initial state $x$
is determined (see for example \cite{Coddington1955}, Ch. 2, Theorems 1.1, 2.1) by the unique continuous trajectory $\bx:[0,\infty)\to\RR^n$ solving
\begin{equation}\label{eq:xdot}
\begin{split}
\dot{\bx}(s) &= f(\bx(s),\bu(s),\bdelta(s)), \text{ a.e. }s\ge 0,\\
\bx(0) &= x.
\end{split}
\end{equation}
Note that this is a solution in Carath\'eodory's \emph{extended sense}, that is, it satisfies the differential equation \emph{almost everywhere} (i.e. except on a subset of Lebesgue measure zero).

Since $d(x)$ is unknown, we attempt to bound it at each state by a compact set $\hat{\D}(x)\subseteq\D$, which is allowed to vary in the state space.
\new{
    In Section~\ref{sec:analysis}, we present a robust, least-restrictive safety control law that
    enforces
    constraint satisfaction subject to $d(x) \in \hat{\D}(x)$.
    In Section~\ref{sec:solution}, we present a Bayesian approach to find a bound $\hat{\D}(x)$ based on a Gaussian process model of $d(x)$.
    Our overall approach therefore combines robust (worst-case) guarantees with Bayesian (probabilistic) analysis,
    by constructing the disturbance bound to reflect the local uncertainty around the inferred disturbance function.%
    \footnote{
        \new{
            Alternative methods to providing the disturbance bounds 
            (for example, a conservative \emph{a priori} estimate, or a system identification procedure)
            are possible, provided some basic conditions to ensure that the dynamical system resulting from \eqref{eq:xdot} with the restriction $ d\in\Dx$ remains well defined.
            For the interested reader, sufficient conditions on $\Dx$ are discussed in the Appendix.
        }
    }
}

Any model-based safety guarantees for the system will require that the bound $\Dx$ correctly captures the unknown part of the dynamics given by $d(x)$, at least at some critical set of states $x$ (discussed in Section~\ref{sec:analysis}). One key insight in this work is that the system should take action to ensure safety not only when the model predicts that this action may be necessary, but also when the system detects that the model itself may become unreliable in the predictable future.

We state here a preliminary result that will be useful later on in the paper,
and introduce the notion of local model reliability.

\begin{proposition}\label{prop:locally_correct}
If $d(x)\in\interior\Dx$ and the set-valued map $\hat\D:\RR^n\to2^{\D}$ is Lipschitz-continuous under the Hausdorff metric%
\footnote{The Hausdorff metric (or Hausdorff distance) between any two sets $A$ and $B$ in a metric space $(M,d_M)$ is defined as
${d_{H}(A,B) = \max\{\,\sup_{a \in A} \inf_{b \in B} d_M(a,b),\, \sup_{b \in B} \inf_{a \in A} d_M(a,b)\,\}}$.}%
, then there exists $ \Delta t>0$ such that all possible trajectories followed by the system starting at $x$ will satisfy $d\big(\xi(\tau)\big)\in\hat\D\big(\xi(\tau)\big)$ for all $\tau\in[t,t+ \Delta t]$.
\begin{proof}
Let $L_{\hat\D}$ be the Lipschitz (Hausdorff) constant of $\hat\D$, $L_d$ the Lipschitz constant of $d$, and $C_f$ a norm bound on the dynamics $f$.
We then have that over an arbitrary time interval $[t,t+ \Delta t]$,
regardless of the control and disturbance signals $\bm{u}(\cdot)$, $\bm{ d}(\cdot)$,
any system trajectory starting at ${\xi(t)=x}$ satisfies
${|\xi(\tau)-x|\le C_f  \Delta t}, {\forall \tau\in[t,t+ \Delta t]}$.
This implies both ${|d(\xi(\tau))-d(x)|\le L_d C_f  \Delta t}$
and ${d_H\big(\hat \D(\xi(\tau)),\hat\D(x)\big)\le L_{\hat\D} \C_f  \Delta t}$.
Requiring that the open ball ${B\big(d(x),(L_d+L_{\hat\D}) C_f  \Delta t\big)}$ be contained in $\Dx$
ensures ${d(\xi(\tau))\in\hat\D(\xi(\tau))}$. Since $d(x)\in\interior\Dx$, there must exist a small enough $ \Delta t>0$ for which this condition is met.
\end{proof}
\end{proposition}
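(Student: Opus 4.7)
The plan is to argue that both the true disturbance value $d(\xi(\tau))$ and the current bounding set $\hat\D(\xi(\tau))$ cannot drift far from their values at $x$ over a short time, and to use the interior hypothesis $d(x)\in\interior\hat\D(x)$ to absorb both drifts and keep the inclusion intact. The three ingredients I need to marshal are (i) the boundedness of $f$, (ii) the Lipschitz constant $L_d$ of $d$, and (iii) the Hausdorff-Lipschitz constant $L_{\hat\D}$ of $\hat\D$.

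First, I would translate ``short time'' into a bound on how far the state has moved: since $f$ is uniformly bounded, say by $C_f$, the solution of \eqref{eq:xdot} satisfies $|\xi(\tau)-x|\le C_f\Delta t$ for every admissible pair of signals $\bu(\cdot),\bdelta(\cdot)$ and every $\tau\in[t,t+\Delta t]$. Plugging this into the Lipschitz bound on $d$ gives $|d(\xi(\tau))-d(x)|\le L_d C_f\Delta t$, and into the Hausdorff-Lipschitz bound on $\hat\D$ gives $d_H\bigl(\hat\D(\xi(\tau)),\hat\D(x)\bigr)\le L_{\hat\D}C_f\Delta t$. All subsequent estimates reduce to algebra on these two inequalities.

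Because $d(x)\in\interior\hat\D(x)$, there exists $r>0$ with $B(d(x),r)\subset\hat\D(x)$. I would then pick any $\Delta t$ satisfying $(L_d+L_{\hat\D})C_f\Delta t<r$, so that the enlarged open ball $B\bigl(d(x),(L_d+L_{\hat\D})C_f\Delta t\bigr)$ is still contained in $\hat\D(x)$. Combined with the drift bound on $d(\xi(\tau))$, this places the entire $L_{\hat\D}C_f\Delta t$-neighborhood of $d(\xi(\tau))$ inside $\hat\D(x)$ by the triangle inequality, and the Hausdorff bound then lets me conclude $d(\xi(\tau))\in\hat\D(\xi(\tau))$.

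The step I expect to be the main obstacle is precisely this last conversion of Hausdorff closeness into pointwise membership: in general, Hausdorff proximity of two sets does not imply that a strictly interior point of one lies in the other. To patch this cleanly I would either invoke convexity of the disturbance sets $\hat\D(\cdot)$ (a mild assumption in this setting) and use a supporting-hyperplane argument to show that any point of $\hat\D(x)$ that sits at distance more than $L_{\hat\D}C_f\Delta t$ from $\partial\hat\D(x)$ must belong to $\hat\D(\xi(\tau))$, or instead exploit the inner-semicontinuity/Lipschitz selection property implied by the Hausdorff-Lipschitz hypothesis to obtain the same conclusion without convexity. Once this step is settled, existence of $\Delta t$ is immediate: any value in $\bigl(0,\,r/((L_d+L_{\hat\D})C_f)\bigr)$ works.
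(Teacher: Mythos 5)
Your proof follows the paper's argument essentially step for step: bound the state drift by $C_f\Delta t$, push it through the Lipschitz constants $L_d$ and $L_{\hat\D}$, and use $d(x)\in\interior\Dx$ to choose $\Delta t$ small enough that the ball $B\big(d(x),(L_d+L_{\hat\D})C_f\Delta t\big)$ fits inside $\Dx$. The one place you go beyond the paper is in flagging the final step --- converting Hausdorff proximity of $\hat\D(\xi(\tau))$ to $\hat\D(x)$ into pointwise membership $d(\xi(\tau))\in\hat\D(\xi(\tau))$ --- and you are right to do so: the paper asserts this implication without justification, and for an arbitrary Hausdorff-Lipschitz set-valued map with nonconvex values it can fail (e.g.\ a set that develops a small gap near $d(\xi(\tau))$ while staying Hausdorff-close). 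Your proposed patch via convexity of $\hat\D(\cdot)$ and a supporting-hyperplane argument is exactly what is needed, and it is satisfied in the paper's setting since $\Dx$ is the hyperrectangle of \eqref{eq:Dx}; so your write-up is, if anything, more careful than the original.
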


\new{
We can further quantify this $\Delta t$ through the signed distance%
\footnote{
    For any nonempty set $\M\subset\RR^m$, the \emph{signed distance function} ${s_{\M}:\RR^m\to\RR}$ is defined as $\inf_{y\in\M} |z-y|$ for points $z\in\RR^m\setminus\M$ and $-\inf_{y\in\RR^m\setminus\M} |z-y|$ for points $z\in\M$, where $|\cdot|$ denotes a norm on $\RR^m$.
}
to $\Dx$ at the current $d(x)$.
}
\begin{corollary}
If the Lipschitz constants are known, then ${d(x)\in\interior\Dx}$ implies ${d(\xi(\tau))\in\hat\D(\xi(\tau))}$ for all times ${\tau\in[t,t+ \Delta t]}$, with
\[ \Delta t=\frac{-s_{\Dx}\big(d(x)\big)}{(L_d+L_{\hat\D}) C_f}.\]
\end{corollary}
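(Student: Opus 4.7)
The plan is to obtain the corollary as a direct quantification of the proof of Proposition~\ref{prop:locally_correct}, replacing the non-constructive ``there exists a small enough $\Delta t$'' with the explicit largest value compatible with the containment condition identified in that proof.

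First I would recall the key inclusion extracted from the proof of Proposition~\ref{prop:locally_correct}: for any $\Delta t>0$, the containment
\[
B\bigl(d(x),(L_d+L_{\hat\D})C_f\,\Delta t\bigr)\subseteq\hat\D(x)
\]
is \emph{sufficient} to guarantee $d(\xi(\tau))\in\hat\D(\xi(\tau))$ for every $\tau\in[t,t+\Delta t]$ and along every admissible trajectory. All three ingredients used there---the trajectory bound $|\xi(\tau)-x|\le C_f\Delta t$, the Lipschitz continuity of $d(\cdot)$ and of $\hat\D(\cdot)$ under the Hausdorff metric---are purely quantitative, so the constants $L_d$, $L_{\hat\D}$, $C_f$ appearing in the expression are exactly the ones already used.

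Next I would translate the containment condition into a condition on $\Delta t$ by means of the signed distance function. Because $\hat\D(x)$ is compact (hence closed) and $d(x)\in\interior\hat\D(x)$, the quantity $-s_{\hat\D(x)}\bigl(d(x)\bigr)$ equals $\inf_{y\notin\hat\D(x)}|d(x)-y|$, i.e. the distance from $d(x)$ to the complement of $\hat\D(x)$, and is strictly positive. Any point $y$ with $|y-d(x)|<-s_{\hat\D(x)}(d(x))$ cannot lie in the (open) complement of $\hat\D(x)$, so the open ball of radius $-s_{\hat\D(x)}(d(x))$ around $d(x)$ is contained in $\hat\D(x)$. Therefore the containment condition holds whenever
\[
(L_d+L_{\hat\D})C_f\,\Delta t\le -s_{\hat\D(x)}\bigl(d(x)\bigr),
\]
and the largest $\Delta t$ satisfying this inequality is precisely the one stated in the corollary.

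Finally, plugging this $\Delta t$ into the conclusion of Proposition~\ref{prop:locally_correct} yields $d(\xi(\tau))\in\hat\D(\xi(\tau))$ on $[t,t+\Delta t]$, which is exactly what the corollary claims. The only subtle point---and the one I would double-check carefully---is the boundary case where the ball radius equals the distance to the complement; this is handled cleanly because $\hat\D(x)$ is closed, so the open ball of that radius is genuinely contained in $\hat\D(x)$. No additional machinery is needed beyond Proposition~\ref{prop:locally_correct} and the definition of $s_{\hat\D(x)}$.
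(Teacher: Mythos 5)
Your derivation is correct and is exactly the quantification the paper intends: the corollary is stated without proof as an immediate consequence of Proposition~\ref{prop:locally_correct}, and you correctly convert the open-ball containment $B\big(d(x),(L_d+L_{\hat\D})C_f\,\Delta t\big)\subseteq\Dx$ into the inequality $(L_d+L_{\hat\D})C_f\,\Delta t\le -s_{\Dx}(d(x))$ and take the largest admissible $\Delta t$. One minor remark: the boundary case needs no appeal to closedness of $\Dx$ --- any point at distance strictly less than $\inf_{y\notin\Dx}|d(x)-y|$ from $d(x)$ lies in $\Dx$ by definition of the infimum, so the open ball of exactly that radius is contained in $\Dx$ regardless.
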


The disturbance bounds $\hat\D$ derived in this paper satisfy the hypothesis of Proposition \ref{prop:locally_correct} (see Appendix for details), and we thus refer to the condition $d(x)\in\interior\Dx$ as the model being \emph{locally reliable} at $x$.

Finally, we assume that the effect of the disturbance on the dynamics is independent of the action applied by the controller.
\begin{equation}\label{eq:decoupled}
\dot{x} = f\big(x,u,d(x)\big) = g\big(x,u\big) + h\big(d(x)\big).
\end{equation}
with $g:\RR^n\times\RR^{n_u}\to\RR^n, h:\RR^{n_p}\to\RR^n$, where $g$, and $h$ inherit Lipschitz continuity in their first argument from $f$ and $h$ is injective onto its image.
This decoupling assumption, made for ease of exposition, is not strictly necessary, and the theoretical results in this paper can be easily adapted to the coupled case.

Throughout our analysis, we will use the notation $\bx_{x,\hat\D}^{\bu,\bdelta}(\cdot)$ to denote the state trajectory $t\mapsto x$ corresponding to the initial condition $x\in\RR^n$, the control signal $\bu\in\UU$ and the disturbance signal $\bdelta\in\DD$, subjecting the latter to satisfy $\bdelta(t)\in\hat\D\big(\bx^{\bu,\bdelta}_{x,\hat\D}(t)\big)$ for all $t\ge0$. 
\subsection{State Constraints}\label{subsec:formulation_constraints}
A central element in our problem is the \emph{constraint set}, which defines a region $\K\subseteq \RR^n$ of the state space, typically 
the complement of all unacceptable failure states,
where the system is required to remain throughout the learning process.
This set is assumed closed and time-invariant; no further assumptions (boundedness, connectedness, convexity, etc.) are~needed.

From closedness,
we can implicitly characterize $\K$ as the zero superlevel set of a Lipschitz \emph{surface function} $l:\RR^n\rightarrow\RR$: 
\begin{equation}\label{eq:l}
x\in\K\iff l(x)\ge0.
\end{equation}
This function always exists, since we can simply choose $l(x) = -s_{\K}(x)$, 
which is Lipschitz continuous by definition. 

To express whether a given trajectory \emph{ever} violates the constraints, let the functional $\V:\RR^n\times\UU\times\DD\to\RR$ assign to each initial state $x$ and input signals $\bu(\cdot)$, $\bdelta(\cdot)$ the lowest value of $l(\cdot)$ achieved by trajectory $\bx_{x,\hat\D}^{\bu,\bdelta}(\cdot)$ over all times $t\ge0$: 
\begin{equation}\label{eq:V}
\mathcal{V}\big(x,\bu(\cdot),\bdelta(\cdot)\big) := \inf_{t\ge 0}l\big(\bx_{x,\hat\D}^{\bu,\bdelta}(t)\big).
\end{equation}
This outcome $\V$ will be strictly smaller than zero if there exists any $t\in[0,\infty)$ at which the trajectory leaves the constraint set, and will be nonnegative if the system remains in the constraint set for all of $t\ge 0$. Denoting $\V^{\bu,\bdelta}(x) = \V\big(x,\bu(\cdot),\bdelta(\cdot)\big)$, the following statement follows from \eqref{eq:l} and \eqref{eq:V} by construction.
\begin{proposition}\label{Value}
The set of points $x$ from which the system trajectory $\bx^{\bu,\bdelta}_{x,\hat\D}(\cdot)$ under given inputs $\bu(\cdot)\in\UU,\bdelta(\cdot)\in\DD$ will remain in the constraint set $\K$ at all times $t\ge0$ is equal to the zero superlevel set of $\mathcal{V}^{\bu,\bdelta}(\cdot)$:
\[
\{x\in\RR^n: \forall t\ge0,\; \bx^{\bu,\bdelta}_{x,\hat\D}(t)\in\K\}=\{x\in\RR^n: \mathcal{V}^{\bu,\bdelta}(\cdot)\ge0\}.
\]
\end{proposition}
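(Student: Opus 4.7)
The proof is essentially a chain of definitional equivalences, so my plan is to unfold the symbols rather than perform any real calculation. I will establish the set equality by proving the logical equivalence of the two defining membership conditions for an arbitrary point $x\in\RR^n$.

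First, I would rewrite the statement ``$\bx^{\bu,\bdelta}_{x,\hat\D}(t)\in\K$ for all $t\ge0$'' using the surface-function characterization from \eqref{eq:l}. This converts the trajectory-containment condition into the pointwise inequality $l\big(\bx^{\bu,\bdelta}_{x,\hat\D}(t)\big)\ge 0$ holding for every $t\ge 0$, i.e., the nonnegativity of every element of the set $\big\{l\big(\bx^{\bu,\bdelta}_{x,\hat\D}(t)\big) : t\ge0\big\}\subseteq\RR$.

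Next, I would invoke the basic property of the infimum as the greatest lower bound to argue that this pointwise condition is equivalent to $\inf_{t\ge0} l\big(\bx^{\bu,\bdelta}_{x,\hat\D}(t)\big)\ge 0$. One direction is immediate: if every element of the set is nonnegative, then $0$ is a lower bound, so the greatest lower bound is at least $0$. For the converse, since the infimum is itself a lower bound of the set, an infimum that is nonnegative forces every element of the set to be nonnegative. By definition \eqref{eq:V}, this infimum is precisely $\V^{\bu,\bdelta}(x)$, so the chain of equivalences delivers the claimed set equality.

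I do not anticipate any real obstacle. The statement is essentially a translation of the trajectory-safety condition through the level-set description of $\K$ and the infimum-of-$l$ functional $\V$. No continuity of $t\mapsto l\big(\bx^{\bu,\bdelta}_{x,\hat\D}(t)\big)$, existence of a minimizer, nor further regularity of $f$ is needed beyond the existence of the (unique) Carath\'eodory trajectory already established from \eqref{eq:xdot}. The only small point to mention explicitly is that the equivalence ``$\inf S\ge 0 \iff s\ge 0$ for every $s\in S$'' is a general property of the infimum on $\RR$, which makes the whole argument a couple of lines.
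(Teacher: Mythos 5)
Your proof is correct and is essentially the paper's own argument: the paper simply asserts that the proposition ``follows from \eqref{eq:l} and \eqref{eq:V} by construction,'' and your write-up just makes that construction explicit via the standard equivalence $\inf_{t\ge0} l\big(\bx^{\bu,\bdelta}_{x,\hat\D}(t)\big)\ge 0 \iff l\big(\bx^{\bu,\bdelta}_{x,\hat\D}(t)\big)\ge 0$ for all $t\ge0$. Your observation that no continuity of the map $t\mapsto l\big(\bx^{\bu,\bdelta}_{x,\hat\D}(t)\big)$ or attainment of the infimum is needed is accurate.
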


Guaranteeing safe evolution from a given point $x\in\RR^n$ given an uncertainty bound $\hat\D$ requires determining whether there exists a control input $\bu(\cdot)\in\UU$ such that, for all disturbance inputs $\bdelta(\cdot)\in\DD$ satisfying $\bdelta(t)\in\hat\D\big(\bx^{\bu,\bdelta}_{x,\hat\D}(t)\big)$, the evolution of the system remains in $\K$, or equivalently $\V^{\bu,\bdelta}(x)\ge 0$. In Section \ref{sec:analysis}, we review how to answer this question using differential game theory and state some important properties of the associated solution. 
\subsection{Objective: Safe Learning}

Learning-based control aims to achieve desirable system behavior
by autonomously improving a policy $\kappa_l:{\RR^n\to\U}$, typically seeking to optimize an objective function.
Safe learning additionally requires that certain constraints $\K$ remain satisfied while searching for such a policy.
With full knowledge of the system dynamics ${F(x,u) = f\big(x,u,d(x)\big)}$, we would like to find 
a safe control policy $\kappa^*:\RR^n\to\U$ producing trajectories $\bx(t) \in \K$, $\forall t \ge 0$, for the largest set of initial states $x = \bx(0)$,
then restrict any learned policy so that $\kappa_l(x)=\kappa^*(x)$ wherever required to ensure safety.
When $d(x)$ is not known exactly, however, this problem cannot be solved.

Instead, given an estimated disturbance set $\hat \D(x)$, we can find an inner approximation of the set of safe states by considering all the possible trajectories that can be produced under the bounded uncertainty $d(x)\in\Dx$.
Our goal, then, is to find the set of \emph{robustly safe} states $x$ for which there exists a 
control policy $\kappa^*$ that can keep
the closed-loop system evolution
in $\K$, 
and consistently limit $\kappa_l$ to ensure that $\kappa^*$ is applied when necessary. 
To formally state this, we introduce an important notion from robust control theory.
\begin{definition}
A subset $\M\subset\RR^n$ is a \emph{robust controlled invariant set} under uncertain dynamics ${\dot x = f(x,u,d)}$, ${d\in\Dx}$, if
there exists a feedback control policy ${\kappa: \RR^n\to\U}$ such that
all possible system trajectories starting at $\bx(0)\in\M$ are guaranteed to satisfy
$\bx(t)\in\M$ for all time $t\ge0$.
\end{definition}

Given that trajectories are continuous, the system state can only leave $\M$ by crossing its boundary $\partial \M$. Hence if $\M$ is closed, applying the feedback policy $\kappa(x)$ for $x\in\partial\M$ is enough to render $\M$ robust controlled invariant, allowing an arbitrary control action to be applied in the interior of $\M$.

\begin{definition}\label{def:safe_set}
The \emph{safe set} $\Omega_{\hat\D}$ is the maximal robust controlled invariant set 
under uncertain dynamics ${\dot x = f(x,u, d)}$, ${d\in\Dx}$, that is contained in the constraint set $\K$.
\end{definition}

\new{Success in safe learning therefore seems closely linked to model uncertainty:}
a tighter bound $\hat \D(x)$ on $d(x)$ yields a less conservative safe set $\Omega_{\hat\D}$, which in turn reduces the restrictions on the learning process.
However, an estimated bound that fails to fully capture $d(x)$ may allow the system to execute control actions resulting in a constraint violation.
The disturbance bound should thus be as tight as possible, to allow the system greater freedom in learning, yet wide enough to confidently capture the unknown dynamics, in order to ensure safety. 

In the following two sections, we formalize this tradeoff and propose a framework to reason about safety guarantees under uncertainty. Section \ref{sec:analysis} poses the safety problem as a differential game between the controller and an adversarial disturbance, presenting a stronger result than commonly used in the reachability safety literature, which exploits the entire value function of the game rather than only its zero level set. Section \ref{sec:solution} leverages this result to provide a principled approach to global safety under model uncertainty, as well as a fast local alternative that may often be useful in practice. 
	
\section{Safety as a Differential Game}\label{sec:analysis}
The safety problem can be posed as a two-player zero-sum differential game between the system controller and the disturbance. Intuitively, we are requiring the controller to keep the system from violating the constraints for \emph{all} possible disturbance inputs within a certain family:
by conducting a worst-case analysis assuming an optimally adversarial disturbance, we implicitly protect the system against all ``suboptimal" disturbances as well.
\new{We first introduce relevant background on differential games, and then present new enabling insights.}
\new{\subsection{Background: HJI Equation and Safe Set}}
\label{subsec:analysis_safeset}

To obtain a \emph{safe set} and an associated \emph{safety policy},
we formulate a game whose outcome is given by the functional $\V\big(x,\bu(\cdot),\bdelta(\cdot)\big)$ introduced in \eqref{eq:V}, negative for those trajectories $\bx_{x,\hat\D}^{\bu,\bdelta}(\cdot)$ that at some point violate the constraints $\K$.

In the robust safety problem, the controller seeks to maximize the outcome of the game, while the disturbance tries to minimize it: that is, the disturbance is trying to drive the system out of the constraint set, and the controller wants to prevent it from succeeding. 
Following \cite{
Evans1984}, we define the set of \emph{nonanticipative strategies} for the disturbance containing the functionals
$\B = \{\bbeta:\UU\to\DD\;|\;
\forall t\ge 0,\; \forall \bu(\cdot),\hat{\bu}(\cdot)\in\UU,$
${\big(\bu(\tau) \!=\! \hat{\bu}(\tau)\text{ a.e.} \tau\ge0\big)}\Rightarrow{
{\big(\bbeta[\bu](\tau) \!=\! \bbeta[\hat{\bu}](\tau)}{\text{ a.e.} \tau\ge0\big)}}\}$. 
Since the disturbance and the control inputs are decoupled in the system dynamics, Isaacs' minimax condition holds\footnote{This means that we could have alternatively let the controller use nonanticipative strategies, without affecting the solution of the game.} and the \emph{value} of the game is well defined as:
\begin{equation}\label{eq:value}
{V}(x):=\inf_{\bbeta[\bu](\cdot)\in\B}\sup_{\bu(\cdot)\in\UU}\mathcal{V}\big(x,\bu(\cdot),\bbeta[\bu](\cdot)\big)
\enspace .
\end{equation}
Under this information structure, we draw on the (infinite-horizon) discriminating kernel concept from viability theory.
\begin{definition}\label{def:disc}
A point $x\in\K$ is in $\K$'s \emph{discriminating kernel} $Disc_{\hat\D}(\K)$ 
if the system trajectory $\bx^{\bu,\bdelta}_{x,\hat\D}$ starting at $x$, with both players acting optimally, remains in $\K$ for all time $t\ge0$:
\begin{align*}
\Disc_{\hat\D}(\K) := 
\{&x\in\RR^n: \forall \bbeta(\cdot)\in\B, \exists \bu(\cdot)\in\UU
    \enspace,\notag \\
&\forall t\ge0, \bx^{\bu,\bbeta[\bu]}_{x,\hat\D}(t)\in\K\}
    \enspace .
\end{align*}
\end{definition}
The following classical result follows from Proposition \ref{Value}.
\begin{proposition}
The discriminating kernel of the constraint set $\K$
is the zero superlevel set of the value function $V$:
\begin{align*}
\Disc_{\hat\D}(\K)&= \{x\in\RR^n: V(x)\ge0\}.
\end{align*}
\end{proposition}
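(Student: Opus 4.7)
The plan is to establish the two set inclusions directly from the definitions, using Proposition \ref{Value} as the bridge between ``trajectory stays in $\K$ for all $t\ge 0$'' and ``$\V(x, \bu, \bdelta) \ge 0$''.

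For the easy inclusion $\Disc_{\hat\D}(\K) \subseteq \{x : V(x) \ge 0\}$, I would fix $x \in \Disc_{\hat\D}(\K)$ and an arbitrary $\bbeta \in \B$, pick a control signal $\bu \in \UU$ witnessing the existential quantifier in Definition \ref{def:disc}, and immediately read off $\V(x, \bu, \bbeta[\bu]) \ge 0$ from Proposition \ref{Value}. This yields $\sup_{\bu \in \UU} \V(x, \bu, \bbeta[\bu]) \ge 0$ for every $\bbeta$, and taking the infimum over $\bbeta \in \B$ delivers $V(x) \ge 0$. This direction is little more than unrolling definitions.

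For the reverse inclusion $\{x : V(x) \ge 0\} \subseteq \Disc_{\hat\D}(\K)$, I would start from $V(x) \ge 0$ and unwrap the definition of $V$ to get $\sup_{\bu \in \UU} \V(x, \bu, \bbeta[\bu]) \ge 0$ for every $\bbeta \in \B$. To conclude $x \in \Disc_{\hat\D}(\K)$, I must exhibit, for each such $\bbeta$, a control $\bu$ that actually \emph{attains} $\V(x, \bu, \bbeta[\bu]) \ge 0$; Proposition \ref{Value} would then give trajectory containment in $\K$. The main obstacle is precisely this attainment step, since a priori the supremum might only be approached from below rather than realized.

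I would close the gap by invoking standard differential-game machinery, which applies here because Isaacs' minimax condition holds under the decoupling assumption \eqref{eq:decoupled}. One route is to use the Evans--Souganidis viscosity-solution characterization of $V$ and extract a Hamiltonian-maximizing safety feedback whose closed-loop response against any nonanticipative $\bbeta$ satisfies $\V \ge 0$ whenever $V(x) \ge 0$. Alternatively, a direct compactness argument on a maximizing sequence $\bu_n$, using boundedness of $\U$ and the Lipschitz, uniformly bounded flow $f$ to extract (via Arzel\`a--Ascoli on the associated trajectories) a limiting control $\bu_*$ achieving $\V(x, \bu_*, \bbeta[\bu_*]) \ge 0$, would also suffice. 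Either path reduces the proposition to the classical existence of minimax optimal strategies in reachability games, which is why the authors flag it as ``classical''.
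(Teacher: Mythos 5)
The paper gives no proof of this proposition at all---it is asserted as a ``classical result'' that ``follows from Proposition \ref{Value}''---so your write-up necessarily supplies details the source omits. Your two-inclusion decomposition is the right one: the easy direction is exactly the definition-unrolling the paper has in mind, and you correctly isolate the one genuinely nontrivial point, namely that $\sup_{\bu\in\UU}\V(x,\bu,\bbeta[\bu])\ge 0$ for every $\bbeta$ must be upgraded to the existence, for each $\bbeta$, of a control actually achieving $\V\ge 0$ (the supremum could equal $0$ without being attained). Your first route for closing this gap---extracting a Hamiltonian-maximizing safety feedback from the viscosity-solution characterization of $V$, valid here because Isaacs' condition holds under the decoupling assumption---is the standard argument in the reachability literature the paper is implicitly citing, and it suffices. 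Your second route, however, is weaker than you suggest: Arzel\`a--Ascoli does give uniform convergence on compacts of a subsequence of trajectories $\xi_n$ associated with a maximizing sequence $\bu_n$, but the limit curve need not be realizable by an admissible ordinary control unless the velocity sets $\{f(x,u,d):u\in\U\}$ are convex (otherwise one must pass through relaxed controls and a Filippov-type measurable selection), and the disturbance responses $\bbeta[\bu_n]$ change with $n$, so one cannot simply carry a fixed disturbance signal along the subsequence. Since you only need one of the two routes and the first is sound, the proposal stands; I would either drop the compactness alternative or add the convexity/relaxation hypotheses it actually requires.
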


Further, from Definitions \ref{def:safe_set} and \ref{def:disc}, it can be seen that the discriminating kernel $\Disc_{\hat\D}(\K)$ is identical to the safe set $\Omega_{\hat\D}$.

It has been shown that the value function for \emph{minimum payoff} games of the form presented in Section \ref{subsec:analysis_safeset}
(i.e. games in which the payoff is the minimum of a state function over time)
can be characterized as the unique viscosity solution to a variational inequality involving an appropriate Hamiltonian \cite{Barron1990},~\cite{Fisac2015};
an alternative formulation involves a modified partial differential equation \cite{Mitchell2005}.
In a finite-horizon setting, with the game taking place over the compact time interval $[0,T]$,
the value function $V(x,t)$
can be computed by solving the Hamilton-Jacobi-Isaacs (HJI) variational inequality:
\begin{subequations}\label{eq:HJI}\begin{align}
    &\fixwidth{
    \min \! \left\{ \!
       l(x) \! - \! V(x,t),
       \frac{\partial V}{\partial t}(x,t) \! + \!
       \max_{u\in\U} \!\min_{ d\in\hat\D(x)} \!\! \frac{\partial V}{\partial x}(x,t) f(x,u, d)
    \! \right\}
    }\label{eq:HJIa}\\
    &V(x,T) = l(x).\label{eq:HJIb}
\end{align}\end{subequations}

As long as there exists a nonempty safe set in the problem,
$V(x,t)$ becomes independent of $t$ inside of this set
as $T\to\infty$.
We accordingly drop the dependence on $t$ and recover $V(x)$ as defined in \eqref{eq:value}, which we refer to as the \emph{safety function}.

\begin{definition}
The \emph{optimal safe policy} $\kappa^*(\cdot)$ is the solution to the optimization:%
\footnote{
    While in general the solution need not be unique, we can always choose one element of the $\arg\max$ set arbitrarily. Therefore we will assume for simplicity a policy $\kappa^*:\RR^n\to\U$ uniquely mapping states to control inputs.
}
\[ \kappa^*(x) = \arg\max_{u\in\U} \min_{ d\in\Dx} \frac{\partial V}{\partial x}(x) f(x,u, d).\]
\end{definition}
Policy $\kappa^*(x)$ 
attempts to drive the system to the safest possible state always assuming an adversarial disturbance.
If the disturbance bound $\hat{D}(x)$ is correct everywhere,
then one can allow the system to execute any desired control
while in the interior of $\Omega_{\hat\D}$, as long as the safety preserving action $\kappa^*(x)$ is taken whenever the state reaches the boundary $\partial \Omega_{\hat\D}$; the system is then guaranteed to remain inside $\Omega_{\hat\D}$ for all time.
This least-restrictive control law can be used in conjunction with an arbitrary learning-based control policy $\kappa_l(x)$ (which may be repeatedly updated by the corresponding learning algorithm), to produce a \emph{safe learning policy}:

\begin{equation}\label{eq:least-restrictive}
\kappa(x) =
\begin{cases}
\kappa_l(x), & \text{if $V(x)>0$}, \\
\kappa^*(x), & \text{otherwise}.
\end{cases}
\end{equation}

\new{%
    Rather than imposing the optimal safe action $\kappa^*(x)$,
    it would have, in principle, been sufficient to
    project the desired $\kappa_l(x)$ onto the set of control inputs that guarantee nonnegative local evolution of $V$ for all $d\in\Dx$.
    However, $\kappa^*(x)$
    results in the greatest predicted increase in value, which is desirable under model uncertainty, as we will see in Section \ref{subsec:validation}.
}%
\subsection{ Invariance Properties of Level Sets}

Traditionally, the implicit hypothesis made to guarantee safety using a least-restrictive law in the form of \eqref{eq:least-restrictive} has been correctness of the estimated disturbance bound $\hat\D$ everywhere in the state space,
(i.e. $d(x)\in\hat{\D}(x)$ $\forall x\in\RR^n$), or at least everywhere in the constraint set $\K$ \cite{Mitchell2005},~\cite{Gillula2012a}. We will now argue that the necessary hypothesis for safety is in fact much less stringent, by proving an important result that we will use in the following section to strengthen the proposed safety framework and retain safety guarantees under partially incorrect models.
\begin{proposition}\label{prop:level_set}
    Any nonnegative superlevel set of $V(x)$ is a robust controlled invariant set with respect to $ d\in\hat{\D}(x)$.
\end{proposition}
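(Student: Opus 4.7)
The plan is to show that for any level $c\ge 0$ the set $\Omega_c := \{x\in\RR^n : V(x)\ge c\}$ is rendered robust controlled invariant under the dynamics \eqref{fxud} restricted to $d\in\hat\D(x)$ by the optimal safe policy $\kappa^*$ defined immediately before this proposition. Since the dynamics are continuous and $\Omega_c$ is closed (as $V$ is continuous, being a viscosity solution to \eqref{eq:HJI}), it suffices to show that $V$ is nondecreasing along every closed-loop trajectory.

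The first step is to invoke the HJI variational inequality \eqref{eq:HJI}. In the infinite-horizon limit discussed in Section~\ref{subsec:analysis_safeset}, $V$ becomes time-independent inside the safe set, so $\partial V/\partial t = 0$ and \eqref{eq:HJIa} reduces to
\[
\min\!\left\{l(x)-V(x),\; \max_{u\in\U}\min_{d\in\hat\D(x)} \tfrac{\partial V}{\partial x}(x)\, f(x,u,d)\right\} = 0,
\]
which in particular forces the Hamiltonian term on the right to be nonnegative at every $x\in\RR^n$. Next, fix any $x_0\in\Omega_c$ and consider the trajectory $\xi(\cdot)$ generated by closing the loop with $\bu(t)=\kappa^*(\xi(t))$ under an arbitrary admissible disturbance $\bdelta(t)\in\hat\D(\xi(t))$. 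At each $t\ge 0$ the definition of $\kappa^*$ yields
\[
\tfrac{\partial V}{\partial x}(\xi(t))\, f\!\left(\xi(t),\kappa^*(\xi(t)),\bdelta(t)\right) \;\ge\; \min_{d\in\hat\D(\xi(t))} \tfrac{\partial V}{\partial x}(\xi(t))\, f(\xi(t),\kappa^*(\xi(t)),d) \;\ge\; 0,
\]
so $\tfrac{d}{dt}V(\xi(t))\ge 0$ wherever it is defined. Integrating gives $V(\xi(t))\ge V(x_0)\ge c$ for all $t\ge 0$, so $\xi(t)\in\Omega_c$.

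The main obstacle is that $V$ is in general only a viscosity solution to \eqref{eq:HJI} and need not be classically differentiable, so the chain-rule step above is not literally justified. The cleanest way around this is to bypass derivatives entirely and use a dynamic programming argument: from the definition \eqref{eq:value} together with the minimum-payoff DP principle established in~\cite{Barron1990,Fisac2015}, one has that for every $t\ge 0$,
\[
V(x_0) \;=\; \inf_{\bbeta\in\B}\sup_{\bu\in\UU} \min\!\left\{\inf_{0\le s\le t} l\!\left(\xi^{\bu,\bbeta[\bu]}_{x_0,\hat\D}(s)\right),\; V\!\left(\xi^{\bu,\bbeta[\bu]}_{x_0,\hat\D}(t)\right)\right\}.
\]
Playing $\kappa^*$ against any admissible disturbance preserves the outer sup and, by optimality, forces the right-hand side to remain at least $V(x_0)$; since $V(x_0)\ge c\ge 0$ and $\inf_{s\le t} l(\xi(s))$ can only weakly decrease with $t$, both the running infimum of $l$ and $V(\xi(t))$ must stay $\ge c$, which is exactly the invariance claim. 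Hence $\Omega_c$ is controlled invariant, and the feedback $\kappa^*$ acts as a witness, completing the proof.
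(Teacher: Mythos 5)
Your proof is correct, and its first half is essentially the paper's own argument: the paper likewise observes that in the infinite-horizon limit $\frac{\partial V}{\partial t}=0$, so that the variational inequality \eqref{eq:HJIa} forces $\max_{u\in\U}\min_{d\in\Dx}\frac{\partial V}{\partial x}(x)f(x,u,d)\ge 0$ at every point, whence the maximizing input keeps $V$ from decreasing along the closed-loop trajectory. The paper deals with the nonsmoothness of $V$ only by remarking that Lipschitz continuity (via \cite{Evans1984}) gives almost-everywhere differentiability by Rademacher's theorem, and then reasons with $\frac{\partial V}{\partial x}$ along the trajectory as if it existed --- precisely the gap you flag. Your second argument, via the minimum-payoff dynamic programming principle of \cite{Barron1990},~\cite{Fisac2015}, is a genuinely different route that the paper does not take, and it is the cleaner one: it avoids differentiating $V$ along trajectories altogether and reads the invariance of $\{x: V(x)\ge c\}$ directly off the identity $V(x_0)=\inf_{\bbeta}\sup_{\bu}\min\bigl\{\inf_{0\le s\le t} l(\xi(s)),\, V(\xi(t))\bigr\}$, since both entries of the min must then be at least $V(x_0)\ge c$. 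The one step you should still tighten there is the assertion that ``playing $\kappa^*$ preserves the outer sup'': the DPP is stated for open-loop controls against nonanticipative strategies, so concluding that the static feedback $\kappa^*$ attains (or $\epsilon$-attains) the supremum against an arbitrary admissible disturbance signal requires the standard verification argument for feedback synthesis from the value function; this is routine in this literature, but it is the same place where the paper's own proof is informal, so neither version is fully airtight on that point.
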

\begin{proof}
By Lipschitz continuity of $f$ and $l$, we have that $V$ is Lipschitz continuous \cite{Evans1984} and hence, by Rademacher's theorem, almost everywhere differentiable.
The convergence of $V(x,t)$ to $V(x)$ as $T\to\infty$ implies that at the limit $\frac{\partial V}{\partial t}(x,t)=0$.
Therefore, given any $\alpha\geq 0$, for any point $x \in \{x \mid V(x) \geq \alpha\}$ there must exist a control action $u^*$ such that $\forall  d\in\Dx$, $\frac{\partial V}{\partial x}(x) f(x,u^*, d) \geq 0$; otherwise the right hand side of~\eqref{eq:HJIa} would be strictly negative for $T\to\infty$, contradicting convergence. Then, the value of $V$ from any such state $x$ can always be kept from decreasing, so $\{x|V(x)\ge\alpha\}$ is a robust controlled invariant set with respect to $ d\in\Dx$.
\end{proof}
\begin{proposition} \label{prop:invariance}
    Consider two disturbance sets $\mathcal{D}_1(x)$ and $\mathcal{D}_2(x)$, and a closed set $\M\subset \RR^n$ that is robustly controlled invariant under $\mathcal{D}_1(x)$. If $\mathcal{D}_2(x) \subseteq \mathcal{D}_1(x)$ $\forall x \in \partial \M$, then $\M$ is robustly controlled invariant also under $\mathcal{D}_2(x)$. 
\end{proposition}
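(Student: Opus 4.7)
The plan is to reuse the same feedback policy $\kappa_1$ that witnesses the robust controlled invariance of $\M$ under $\mathcal{D}_1(x)$, and argue that it also witnesses robust controlled invariance under $\mathcal{D}_2(x)$. The underlying principle, already anticipated in the remark following the definition of robust controlled invariance, is that for a closed set invariance is a \emph{local} property at its boundary: continuous trajectories can only leave $\M$ by first crossing $\partial\M$, so only the behavior of the dynamics on $\partial\M$ matters. Because the hypothesis $\mathcal{D}_2(x)\subseteq\mathcal{D}_1(x)$ holds precisely for $x\in\partial\M$, every velocity admissible on the boundary under $\mathcal{D}_2$ is also admissible under $\mathcal{D}_1$, so whatever prevents escape in the $\mathcal{D}_1$ case must continue to prevent escape in the $\mathcal{D}_2$ case.

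To make this rigorous I would argue by contradiction. Suppose some $\mathcal{D}_2$-admissible trajectory $\xi$ driven by $\kappa_1$ starts in $\M$ and reaches $\xi(t^*)\notin\M$ for some $t^*>0$. Let $t_e:=\inf\{t\ge 0 : \xi(t)\notin\M\}$; by continuity of $\xi$ and closedness of $\M$ we have $\xi(t_e)\in\M$, while the definition of the infimum yields a sequence $t_n\downarrow t_e$ with $\xi(t_n)\notin\M$, forcing $\xi(t_e)\in\partial\M$. The hypothesis then gives $\bdelta(t_e)\in\mathcal{D}_2(\xi(t_e))\subseteq\mathcal{D}_1(\xi(t_e))$, and more generally the closed-loop velocity set $\{f(\xi(t_e),\kappa_1(\xi(t_e)),d) : d\in\mathcal{D}_2(\xi(t_e))\}$ is a subset of its $\mathcal{D}_1$ counterpart. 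By the Nagumo/viability characterization of strong invariance, the assumed $\mathcal{D}_1$-invariance of $\M$ forces the latter velocity set to lie in the contingent cone to $\M$ at $\xi(t_e)$; hence so does the $\mathcal{D}_2$ velocity set, which precludes any immediate escape from $\M$ and contradicts the existence of the sequence $\{t_n\}$.

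The main obstacle is deploying the right tangent-cone machinery to rigorously translate ``the trajectory exits $\M$ immediately after $t_e$'' into ``the velocity at $t_e$ is not contained in the tangent cone to $\M$,'' and reciprocally to package the $\mathcal{D}_1$-invariance hypothesis into the corresponding set-valued inclusion on $\partial\M$. Both directions are standard consequences of viability theory for differential inclusions, a framework already ambient in the paper via the use of discriminating kernels; they require only the usual regularity of the disturbance set-valued maps (upper semicontinuity with compact values) that is effectively assumed elsewhere. Once those classical tools are invoked, the proof collapses to the single pointwise set inclusion on $\partial\M$ furnished by the hypothesis.
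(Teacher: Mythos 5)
Your proposal is correct and follows essentially the same route as the paper: both arguments reduce invariance to a boundary condition via continuity of trajectories (first exit must occur on $\partial\M$), and then observe that the pointwise inclusion $\mathcal{D}_2(x)\subseteq\mathcal{D}_1(x)$ on $\partial\M$ lets the $\mathcal{D}_1$-safe policy block escape under $\mathcal{D}_2$ as well. The only difference is cosmetic: you invoke Nagumo/contingent-cone machinery to formalize the ``no immediate escape at the boundary'' step, which the paper states more informally as ``switching to policy $\kappa$ at the boundary crossing time keeps the system in $\M$.''
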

\begin{proof}
    Consider an arbitrary trajectory $\bx^{\bu,\bdelta}_{x_0,\D_2}\in\XX_{\D_2}$ under the disturbance set $\D_2(x)$, starting at $x_0\in\M$, such that for some $\tau<\infty$, $\bx(\tau)\not\in\M$.
    Since trajectories in $\XX_{\D_2}$ are continuous, there must then exist $s \in[t_0,\tau]$ such that $\bx^{\bu,\bdelta}_{x_0,\D_2}(s) \in \partial \M$. On the other hand, because $\M$ is robustly controlled invariant under $\D_1(x)$, we know that $\exists \kappa:\RR^n\to\U$ such that no possible disturbance $d\in\D_1(x)$ can drive the system out of $\M$. Since $\D_2(x) \subseteq \D_1(x)$ $\forall x \in \partial \M$, the same control policy $\kappa^*(x)$ on the boundary guarantees that no disturbance $d\in\D_2(x)\subseteq\D_1(x)$ can drive the system out of $\M$. Hence, for $\bx^{\bu,\bdelta}_{x_0,\D_2}$, switching to policy $\kappa$ at time $s$ guarantees that the system will remain in $\M$. Therefore $\M$ is a robust controlled invariant set under $\D_2(x)$.
\end{proof}
\begin{corollary} \label{cor:invariance}
    Let $\Q_\alpha=\{x\in\RR^n: V(x)=\alpha\}$ with $\alpha\ge0$ be any nonnegative level set of the safety function $V$, computed for some disturbance set $\hat\D(x)$. If $d(x)\in \hat\D(x)$ $\forall x\in\Q_\alpha$, then the superlevel set $\{x\in\RR^n: V(x)\ge\alpha\}$ is an invariant set under the computed safe control policy $\kappa^*(x)$.
\end{corollary}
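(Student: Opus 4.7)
The plan is to combine Propositions \ref{prop:level_set} and \ref{prop:invariance}, using the singleton-valued disturbance $\D_2(x):=\{d(x)\}$ to bridge between worst-case and true dynamics.

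First, let $\M:=\{x\in\RR^n: V(x)\ge\alpha\}$. By Proposition \ref{prop:level_set}, $\M$ is a robust controlled invariant set under $ d\in\hat\D(x)$, and inspection of the proof shows that the policy witnessing this invariance is precisely the optimal safe policy $\kappa^*$, since $\kappa^*(x)$ maximizes $\min_{ d\in\hat\D(x)}\tfrac{\partial V}{\partial x}(x)f(x,u, d)$ by definition and this inner min is nonnegative by the limiting HJI inequality~\eqref{eq:HJIa}. Moreover, $V$ is (Lipschitz) continuous, so $\M$ is closed and ${\partial\M\subseteq\{x:V(x)=\alpha\}=\Q_\alpha}$.

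Next, I would introduce the ``true'' disturbance as the singleton-valued map $\D_2(x):=\{d(x)\}$, so that the closed-loop system under $\kappa^*$ with the unknown dynamics $F(x,u)=f(x,u,d(x))$ corresponds exactly to evolution under $\D_2$. The hypothesis $d(x)\in\hat\D(x)$ for all $x\in\Q_\alpha$ then yields $\D_2(x)\subseteq\hat\D(x)$ on $\partial\M\subseteq\Q_\alpha$, which is exactly the boundary-containment hypothesis of Proposition \ref{prop:invariance}. Applying that proposition with $\D_1=\hat\D$ and $\D_2$ as above, I conclude that $\M$ is robustly controlled invariant also under $\D_2$, i.e.\ under the true disturbance function $d(\cdot)$. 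Since the policy achieving invariance in Proposition \ref{prop:invariance} is carried over from the $\D_1$-invariance, it is the same $\kappa^*$, which is what is claimed.

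The main subtlety I foresee is making sure the policy identified in Proposition \ref{prop:level_set} is precisely $\kappa^*$ as defined earlier (rather than some other admissible safe feedback), and that $\partial\M$ is genuinely contained in $\Q_\alpha$ so that the hypothesis of Proposition \ref{prop:invariance} applies on all of $\partial\M$. Both points are essentially bookkeeping given Lipschitz continuity of $V$ and the $\arg\max$ definition of $\kappa^*$, so no new analytical machinery is needed beyond what is already available in the excerpt.
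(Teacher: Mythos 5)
Your proposal is correct and follows exactly the route the paper intends: the paper states that the corollary ``follows from Propositions \ref{prop:level_set} and \ref{prop:invariance} by considering the singleton $\{d(x)\}$,'' which is precisely your argument, with your additional bookkeeping (that $\partial\M\subseteq\Q_\alpha$ by continuity of $V$, and that the witnessing policy is $\kappa^*$) being valid and in fact making the implicit steps explicit.
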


This corollary, which follows from Propositions \ref{prop:level_set} and \ref{prop:invariance} by considering the singleton $\{d(x)\}$, is an important result that will be at the core of our data-driven safety enhancement.
It provides a sufficient condition for safety, but unlike the standard HJI solution, it does not readily prescribe a least-restrictive control law to exploit it:
how should one determine what candidate $\alpha\ge0$ to choose, or whether a valid $\Q_\alpha$ exists at all?
Deciding when the safe controller should intervene and what guarantees are possible is nontrivial and requires additional analysis.

The next section proposes a Bayesian approach enabling the safety controller to reason about its confidence in the model-based guarantees described in this section.
If this confidence reaches a prescribed minimum value in light of the observed data, the controller can intervene early to ensure that safety will be maintained with high probability.

\section{Bayesian Safety Analysis}
\label{sec:solution}
\subsection{Learning-Based Safe Learning}

As we have seen, robust optimal control and dynamic game theory provide powerful analytical tools to study the safety of a dynamical model. However, it is important to realize that the applicability of any theoretically derived guarantee to the real system is contingent upon the validity of the underlying modeling assumptions;
in the formulation considered here, this amounts to the state disturbance function $d(x)$ being captured by the bound $\hat\D(x)$
on at least a certain subset of the state space.
The system designer therefore faces an inevitable tradeoff between risk and conservativeness, due to the impossibility of accounting for every aspect of the real system in a tractable model.

In many cases, choosing a parametric model \emph{a priori} forces one to become overly conservative in order to ensure that the system behavior will be adequately captured: this results in a large bound $\hat\D(x)$ on the disturbance, which typically leads to a small safe set $\Omega_{\hat\D}$, limiting the learning agent's ability to explore and perform the assigned tasks. In other cases, insufficient caution in the definition of the model can lead to an estimated disturbance set $\hat\D(x)$ that fails to contain the actual model error $d(x)$, and therefore the computed safe set $\Omega_{\hat\D}$ may not in fact be controlled invariant in practice, which can end all safety guarantees.

In order to avoid excessive conservativeness and keep theoretical guarantees valid, it is imperative to have both a principled method to refine the system model based on acquired measurements and a reliable mechanism to detect and react to model discrepancies with the real system's behavior; both of these elements are necessarily data-driven. We thus arrive at what is perhaps the most important insight in this work: \emph{the relation between safety and learning is reciprocal}. Not only is safety a key requirement for learning in autonomous systems: learning about the real system's behavior is itself indispensable to provide practical safety guarantees.

\new{%
In the remainder of this section we propose a method for reasoning about the uncertain system dynamics, using Gaussian processes
    to regularly update the model used for safety analysis,
and introduce a Bayesian approach for online validation of model-based guarantees
\emph{in between} updates.
}%
We then define an adaptive safety control strategy based on this real-time validation, which leverages the theoretical results from Hamilton-Jacobi analysis to provide stronger guarantees
for safe learning under possible model inaccuracies. 
\subsection{Gaussian Process}\label{subsec:GP}

To estimate the disturbance function $d(x)$ over the state space, we model it as being drawn from a Gaussian process. Gaussian processes are a powerful
abstraction that
extends multivariate Gaussian regression to the infinite-dimensional space of functions, allowing Bayesian inference based on (possibly noisy) observations of a function's value at finitely many points.%
\footnote{
    We give here an overview of Gaussian process regression and direct the interested reader to \cite{Rasmussen2006} for a more comprehensive introduction.
}

A Gaussian process is a random process or field defined by a mean function ${\mu:\RR^n \rightarrow \RR}$ and a positive semidefinite covariance kernel function ${k:\RR^n \times \RR^n \rightarrow \RR}$.  We will treat each component ${d^j, j\in\{1,...,n_d\}}$, of the disturbance function as an independent Gaussian process:
\begin{equation}\label{eq:GP}
d^j(x)\sim \mathcal{GP}(\mu^j(x),k^j(x,x')).
\end{equation}
A defining characteristic of a Gaussian process is that the marginal probability distribution of the function value at any finite number of points is a multivariate Gaussian.
This will allow us to
obtain the disturbance bound $\Dx$ as a Cartesian product of confidence intervals
for the components of $d(x)$ at each state $x$, choosing the bound to capture a desired degree of confidence.%
\footnote{
\new{
    By assuming independence of disturbance components
    we are effectively over-approximating the confidence ellipsoid in $\RR^{n_d}$ by its minimal containing box; a less conservative analysis could compute $\Dx$ using a vector-valued Gaussian process model, at the expense of heavier computation.
}
}

Gaussian processes allow incorporating new observations in a nonparametric Bayesian setting. First, assume a prior Gaussian process distribution over the $j$-th component of $d(\cdot)$, with mean $\mu^j(\cdot)$ and covariance kernel $k^j(\cdot,\cdot)$. The class of the prior mean function and covariance kernel function is chosen to capture the characteristics of the model (linearity, periodicity, etc), and is associated to a set of hyperparameters~$\theta_p$. These are typically set to maximize the marginal likelihood of an available set of training data, or possibly to reflect some prior belief about the system.

Next, consider $N$ measurements ${\bf  \hat{d}}^j= [ \hat{d}^j_1,\hdots,  \hat{d}^j_N ]$, observed with independent Gaussian noise $\epsilon_i^j\sim\N(0,(\sigma_n^j)^2)$ at the points $X=[ x_1 , \hdots , x_N ] $, i.e. $ \hat{d}^j_i=d^j(x_i)+\epsilon_i^j$. 
Combined with the prior distribution \eqref{eq:GP}, this new evidence induces a Gaussian process posterior; in particular,
the value of $d^j$
at finitely many points $X_*$
is distributed as a multivariate normal:%
\begin{subequations} \label{eq:posterior}
\begin{align}
&\mathbb{E}[d^j(X_*)\mid {\bf  \hat{d}}^j, X] = \\
& \fixwidth{ \mu^j(X_*) + K^j(X_*,X)(K^j(X,X)+(\sigma_n^j)^2I)^{-1}  (\mathbf{\hat{d}}^j-\mu^j(X)),} \notag
\end{align}%
\begin{align}
&\text{cov}[d^j(X_*) \mid X] =\\
& \fixwidth{ K^j(X_*,X_*)-K^j(X_*,X)(K^j(X,X)+(\sigma_n^j)^2I)^{-1} K^j(X,X_*), } \notag
\end{align}
\end{subequations}
where $d^j_i(X)=d^j(x_i)$, $\mu^j_i(X)=\mu^j(x_i)$, and for any $X,X'$ the matrix $K^j(X,X')$ is defined component-wise as $K^j_{ik}(X,X') = k^j(x_i,x_k')$. Note that whenever a new batch of data $X$ is obtained the hyperparameters of the kernel function are refitted, so the variance implicitly depends on the measurements $d^j$.
If a single query point is considered, i.e. $X_*=\{x_*\}$, the marginalized Gaussian process posterior becomes a univariate normal distribution quantifying both the expected value of the disturbance function, $\bar{d^j}(x_*)$, and the uncertainty of this estimate, $\big(\sigma^j(x_*)\big)^2$, 
\begin{subequations}  \label{eq:single}
\begin{align} &\bar{d^j}(x_*)=\mathbb{E}[d^j(x_*)| {\bf  \hat{d}}^j, X] 
\\
&\big(\sigma^j(x_*)\big)^2=\text{cov}[d^j(x_*)|X]
\enspace .
\end{align}\end{subequations}

We can use the Bayesian machinery of Gaussian process regression to compute a \emph{likely} bound $\Dx$ on the disturbance function $d(x)$ based on the history of commanded inputs $u_i$ and state measurements $x_i$, $i\in\{1,...,N\}$.
To this effect, we assume that a method for approximately measuring the state derivatives is available (e.g. by numerical differentiation), and denote each of these measurements by $\hat f_i$. Based on \eqref{eq:decoupled}, we can obtain measurements of $d(x_i)$
from the residuals between the observed dynamics and the model's prediction:
\begin {equation} \label{eq:residual}
\hat{d}(x_i)= h^{-1}\left( \hat f_i -  g\big(x_i,u_i\big) \right).
\end {equation}
The residuals $\mathbf{\hat{d}}= [ \hat{d}(x_1), \hdots,  \hat{d}(x_N) ]$ are processed through \eqref{eq:posterior} to infer the marginal distribution of $d(x_*)$ for an arbitrary point $x_*$, specified by the expected value $\bar{d^j}(x_*)$ and the standard deviation $\sigma^j(x_*)$ of each component of the disturbance. This distribution can be used to construct a disturbance set $\hat{\D}(x_*)\subseteq\D$ at any point $x_*$; in practice, this will be done at finitely many points $x_{\bf{i}}$ on a grid, and used in the numerical reachability computation to obtain the safety function and the safe control policy.

We now introduce the design parameter $p$ as the desired marginal probability that the disturbance function $d(x)$ will belong to the bound $\hat{\D}(x)$ at each point $x$;
typically, $p$ should be chosen to be close to 1. 
The set $\hat{\D}(x)$ is then chosen for each $x$
as follows. Let $z = \sqrt{2}\,\text{erf}^{-1}(p^{1/n_d})$, where $\text{erf}(\cdot)$ denotes the Gauss error function; that is, define $z$ so that the probability that a sample from a standard normal distribution $\N(0,1)$ lies within $[-z,z]$ is $p^{1/n_d}$. We construct $\Dx$ by taking a Cartesian product of confidence intervals:
\begin{equation}\label{eq:Dx}
\hat{\D}(x)= \prod_{j=1}^{n_d}[\bar{d^j}(x )-z\sigma^j(x), \, \bar{d^j}(x )+z\sigma^j(x)] .
\end{equation}
Since each component $d^j(x)$ is given by an independent Gaussian $\N\left(\bar d^j(x), \sigma^j(x)\right)$, the probability of $d(x)$ lying within the above hyperrectangle is by construction $\left(p^{1/n_d}\right)^{n_d}=p$.
\begin{remark}
It is commonplace to use Gaussian distributions to capture beliefs on variables that are otherwise known to be bounded. While one might object that the unbounded support of \eqref{eq:single} contradicts our problem formulation (in which the disturbance $d$ took values from some compact set $\D\subset\RR^{n_d}$), the hyperrectangle $\Dx$ in \eqref{eq:Dx} is always a compact set. Note that the theoretical input set $\D$ is never needed in practice, so it can always be assumed to contain $\Dx$ for all $x$.
\end{remark}

Under Lipschitz continuous prior means $\mu^j$ and covariance kernels $k^j$, the disturbance bound \eqref{eq:Dx} varies (Hausdorff) Lipschitz-continuously in $x$, satisfying the hypotheses of Proposition \ref{prop:locally_correct}. This is formalized and proved in the Appendix.

The safety analysis described in Section \ref{sec:analysis} can be carried out by solving the Hamilton-Jacobi equation \eqref{eq:HJI} for $\Dx$ given by \eqref{eq:Dx}, which---based on the information available at the time of computation---will be a correct disturbance bound at any single state $x$ with probability $p$. As the system goes on to gather new information, however, the posterior probability for $d(x)\in\Dx$ will change at each $x$ (and will typically no longer equal $p$). More generally, we have the following result.
\begin{proposition}\label{prop:local_climb}
Let $q$ be the probability that $d(x)\in\hat\D(x)$ for some state $x$ with $V(x)\ge0$. Then the probability that $L_f^*V(x):=D_xV(x)\cdot f\left(x,\kappa^*(x),d(x)\right)\ge0$ is at least $q$.
\begin{proof}
Omitting $x$ for conciseness, we have: $P( L_f^*V\ge0) = P\big(L_f^*V\!\ge\!0|d\in\hat\D\big) P\big(d\in\hat\D\big) + P\big(L_f^*V\!\ge\!0|d\not\in\hat\D\big) P\big(d\not\in\hat\D\big).$

By Corollary \ref{cor:invariance}, the first term evaluates to $1\cdot q$; the second term is nonnegative (and will typically be positive, since not all values of $d\not\in\hat\D$ will be unfavorable for safety, and there may be some for which the input $\kappa^*(x)$ leads the system to locally increase $V$).
\end{proof}
\end{proposition}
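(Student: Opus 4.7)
The plan is to prove this by total probability conditioning on the event that the true disturbance at $x$ lies in the estimated bound $\hat\D(x)$, exactly as the excerpt hints. First I would fix the state $x$ with $V(x)\ge 0$ and denote by $A$ the event $\{d(x)\in\hat\D(x)\}$, so by assumption $P(A)=q$. Then write
\[
P\!\left(L_f^*V(x)\ge 0\right) = P\!\left(L_f^*V(x)\ge 0\mid A\right)P(A) + P\!\left(L_f^*V(x)\ge 0\mid A^c\right)P(A^c).
\]

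The crux of the argument is to establish that the first conditional probability is exactly $1$. This is where I would invoke Proposition \ref{prop:level_set} (equivalently Corollary \ref{cor:invariance} with $\alpha = V(x)$): since $V(x)\ge 0$, the superlevel set $\{y : V(y)\ge V(x)\}$ is robust controlled invariant under $d\in\hat\D$, and the proof of Proposition \ref{prop:level_set} shows that there exists a control $u^\star$ such that $\partial V/\partial x\cdot f(x,u^\star,d)\ge 0$ for every $d\in\hat\D(x)$. By the definition of the optimal safe policy $\kappa^*$ as the arg-maximizer of $\min_{d\in\hat\D(x)}\partial V/\partial x\cdot f(x,u,d)$, this $u^\star$ coincides (in value) with $\kappa^*(x)$, so the inner minimum at $u=\kappa^*(x)$ is nonnegative. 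Consequently, conditional on $d(x)\in\hat\D(x)$, we have $L_f^*V(x)\ge 0$ deterministically, giving $P(L_f^*V(x)\ge 0 \mid A)=1$.

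For the second term, I would simply observe that $P(L_f^*V(x)\ge 0\mid A^c)\ge 0$ as a probability. Combining the two terms yields $P(L_f^*V(x)\ge 0) \ge 1\cdot q + 0\cdot(1-q) = q$, as desired. The remark at the end of the proposition — that the second term is typically strictly positive because some disturbances outside $\hat\D$ will still be favorable (or at least not catastrophic) for $V$ at $x$ — is not needed for the bound itself and can be left as an informal comment.

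The main obstacle I anticipate is a technical one rather than conceptual: $L_f^*V(x)$ requires $V$ to be differentiable at $x$, whereas $V$ is only known to be Lipschitz and hence differentiable almost everywhere by Rademacher's theorem (as already invoked in Proposition \ref{prop:level_set}). I would either assume $x$ is a point of differentiability, or replace $D_xV(x)\cdot f$ by the appropriate viscosity/Dini-subdifferential inequality used in the HJI solution theory. Either way, the structural argument — conditioning on $A$, applying Corollary \ref{cor:invariance} to force probability one on that event, and using nonnegativity on the complement — goes through without change.
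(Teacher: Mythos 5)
Your argument is correct and follows essentially the same route as the paper's proof: decompose $P(L_f^*V\ge 0)$ by total probability on the event $d(x)\in\hat\D(x)$, show the conditional probability on that event is $1$ via the level-set invariance result (Corollary~\ref{cor:invariance}), and lower-bound the complementary term by zero. Your additional unpacking of why $\kappa^*(x)$ inherits the nonnegative Lie-derivative property from the $u^\star$ in Proposition~\ref{prop:level_set}, and your caveat about differentiability of the Lipschitz value function, are sound refinements of the same argument rather than a different approach.
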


Based on this result, we can begin to reason about the guarantees of the reachability analysis applied to the real system in a Bayesian framework, inherited from the Gaussian process model. 
\subsection{Online Safety Guarantee Validation}\label{subsec:validation}

In order to ensure safety under the possibility of model mismatch with the real system, it may become necessary to intervene not only on the boundary of the computed safe set, but also whenever the observed evolution of the system indicates that the model-based safety guarantees may lose validity. Indeed, failure to take a safe action in time may lead to complete loss of guarantees if the system enters a region of the state space where the model is consistently incorrect.

While the estimated bound $\hat\D$ (Section \ref{subsec:GP}) and the associated safety guarantees (Section \ref{sec:analysis}) should certainly be recomputed as frequently as possible in light of new evidence, this process can typically take seconds or minutes, and in some cases may even require offline computation.
\new{%
    This motivates the need to augment model-based guarantees through
    an online data-driven
    mechanism
    to quickly adapt to new incoming information%
even as new, improved guarantees are computed.
}%

Bayesian analysis allows us to update our belief on the disturbance function as new observations are obtained. This in turn can be used to provide a probabilistic guarantee on the validity of the safety results obtained from the robust dynamical model generated from the older observations. In the remainder of this section, we will discuss how to update the belief on the disturbance function, and then provide two different theoretical criteria for safety intervention. The first criterion provides global probabilistic guarantees, but has computational challenges associated to its practical implementation. The alternative method only provides a local guarantee, but can more easily be applied in real time.

Let us denote $X_\text{old}$ and ${\bf{\hat{d}^j}}_\text{old}$ as the evidence used in computing the 
disturbance set $\hat{\D}(x)$, and $X_\text{new}$ and $\bf{\hat{d}^j}_\text{new}$ as the evidence acquired online after the disturbance set is computed. Conditioned on the old evidence, the function $d^j(x)$ is normally distributed with mean and variance given by \eqref{eq:posterior} with $X=X_\text{old}$ and $\bf{\hat{d}^j}=\bf{\hat{d}^j}_\text{old}$, and the disturbance set is given by  \eqref{eq:Dx}. If we also condition on 
the new evidence and keep the hyperparameters fixed, then the mean and variance are updated by modifying \eqref{eq:posterior}  
with  $X=[X_\text{old}, X_\text{new}]$ and $\bf{\hat{d}^j}=[\bf{\hat{d}^j}_\text{old},\bf{\hat{d}^j}_\text{new}]$. 

\begin{remark}
Performing the update requires inverting $K^j([X_\text{old},X_\text{new}],[X_\text{old},X_\text{new}])$.
This can be done efficiently employing standard techniques:
since 
$K^j(X_{old},X_{old})$ has already been inverted (in order to compute the disturbance bound $\hat\D$), all that is needed is inverting the Schur Complement of  $K^j(X_\text{old},X_\text{old})$ in $K^j([X_\text{old},X_\text{new}], [X_\text{old},X_\text{new}])$, which has the same size as
$K^j(X_\text{new},X_\text{new})$. 
\end{remark}
\new{\indent
    Ideally we would incorporate $X_\text{new}$ and $\bf{\hat{d}^j}_\text{new}$ to relearn the Gaussian process hyperparameters as quickly as new measurements come in:
    otherwise new measured disturbance values $\bf{\hat{d}^j}_\text{new}$ will only affect the posterior mean, with the variance depending exclusively on where the measurements were made ($X_\text{new}$).
    However, performing this update online is computationally prohibitive.
    Instead, we update the hyperparameters every time a new estimated bound $\hat\D$ is produced for safety analysis, keeping them fixed in between.
    In practice the set $X_\text{old}$ will be much larger than $X_\text{new}$, so the estimated hyperparameters would not be expected to change significantly.%
}%
\begin{remark}
    \new{%
    In settings where conditions are slowly time-varying, it may be desirable to give recently observed data more weight than older observations.
    This can naturally be encoded by the Gaussian process by appending time as an additional dimension in $X$:
    points that are distant in time would then be more weakly correlated, analogous to space.%
    }
\end{remark}

Based on the new Gaussian distribution, 
we can reason about the \emph{posterior} confidence in the safety guarantees produced by our original safety analysis,
which relied on the \emph{prior} Gaussian distribution resulting from measurements $\bf{\hat{d}^j}_\text{old}$ at states $X_\text{old}$.

\subsubsection{Global Bayesian safety analysis}

The strongest result available for guaranteeing safety under the present framework is Corollary \ref{cor:invariance}, which allows the system to exploit any superzero level set $\Q_\alpha$ ($\alpha\ge0$) of the safety function $V$ throughout which the model is locally correct; all that is needed is for such a $\Q_\alpha$ to exist for $\alpha\in[0,V(x)]$ given the current state $x$.

It is possible to devise a safety policy to fully exploit the sufficient condition in Corollary \ref{cor:invariance} in a Bayesian setting:
if the posterior probability that the corollary's hypotheses will hold drops to some arbitrary \emph{global confidence threshold} $\gamma_0$, the safe controller can override the learning agent.
With probability $\gamma_0$, the corollary will still apply, in which case the system is guaranteed to remain safe for all time;
even if Corollary \ref{cor:invariance} does not apply at this time (which could happen with probability $1-\gamma_0$), it is still possible that the disturbance $d(x)$ will not consistently take adversarial values that force the computed safety function $V(x)$ to decrease, in which case the system may still evolve safely. Therefore, this policy guarantees a lower bound on the probability of maintaining safety for all time.

In order to apply this safety criterion, the system needs to maintain a Bayesian posterior of the sufficient condition in Corollary \ref{cor:invariance}. We refer to this posterior probability as the \emph{global safety confidence} $\gamma(x;X,{\bf \hat{d}^j})$, or $\gamma(x)$ for conciseness:
\begin{equation}\label{eq:gp_bound_joint}
\begin{split}
\gamma(x;X,{\bf \hat{d}^j}):=P\big(&\exists\alpha\in[0,V(x)],\forall x\in\Q_\alpha: d(x)\in\Dx
|X,{\bf \hat{d}^j}\big).
\end{split}
\end{equation}

Based on this, we propose the least-restrictive control law:
\begin{equation}\label{eq:global_strategy}
\kappa(x) =
\begin{cases}
\kappa_l(x), & \text{if } \big( \gamma(x) > \gamma_0 \big) \wedge \big( V(x)>0 \big),\\
\kappa^*(x), & \text{otherwise,}
\end{cases}
\end{equation}
so the system applies any action it desires if the global safety confidence is above the threshold, but applies the safe controller once this is no longer the case.

Note that if confidence in the safety guarantees is restored after applying the safety action the learning algorithm will be allowed to resume control of the system. This can happen by multiple mechanisms: moving to a region with higher $V(x)$ will tend to increase the probability that \emph{some} lower level set may satisfy the hypotheses of Corollary \ref{cor:invariance}; moving to a region with less inconsistency between expected and observed dynamics will typically lead to higher posterior belief that \emph{nearby} level sets will satisfy the hypotheses of Corollary \ref{cor:invariance}; and generally acquiring new data may, in some cases, increase the posterior confidence that Corollary \ref{cor:invariance} may apply.

    Computing the joint probability that the bound $\Dx$ captures the Gaussian process $d(x)$ \emph{everywhere} on a level set $\Q_\alpha$ is not
    possible, since
    the set of functions $d(x)$ satisfying this condition is bounded on uncountably many dimensions, and thus not measurable in function space.
Similarly, evaluating the joint probability for a continuum of level sets $\Q_\alpha$ for $\alpha\in[0,V(x)]$ is not feasible.
    Instead, exploiting the Lipschitz assumption on $d(x)$,
    we can obtain the sought probability $\gamma(x)$ from a marginal distribution
over a sufficiently dense set of sample points on each $\Q_\alpha$ and a sufficiently dense collection of level sets between $0$ and $V(x)$.

We can then use numerical methods \cite{Genz1992}
to compute the multivariate normal cumulative distribution function and estimate the marginal probability
\new{(using compact logic notation)}:
\begin{equation}\label{eq:gp_bound_marginal}
\gamma(x)\approx P\bigg(\bigvee_{s=1}^S \bigwedge_{i=1}^{I} d(x_{s,i})\in\hat\D(x_{s,i})\bigg),
\end{equation}
over $S$ level sets $0=\alpha_0< ...<\alpha_S=V(x)$ and $I$ sample points from each level set $Q_{\alpha_s}$.
As the density of samples increases with larger $S$ and $I$, the marginal probability \eqref{eq:gp_bound_marginal} asymptotically approaches the Gaussian process probability \eqref{eq:gp_bound_joint}. Unfortunately, however, current numerical methods can only efficiently approximate these probabilities for multivariate Gaussians of about 25 dimensions \cite{Genz1992},
which drastically limits the number of sample points ($S\times I\approx 25$) that the marginal probability can be evaluated over, making it difficult to obtain a useful estimate. In view of this, a viable approach may be to bound \eqref{eq:gp_bound_joint} below as follows:
\begin{equation}\label{eq:gp_lower_bound_joint}
\gamma(x)\ge \underline\gamma(x) := \max_{\alpha\in[0,V(x)]} P\big(\forall x\in\Q_\alpha: d(x)\in\Dx\big),
\end{equation}
and approximately compute this as
\begin{equation}\label{eq:gp_lower_bound_marginal}
\underline\gamma(x) \approx \max_{s\in\{1,...,S\}}P\bigg(\bigwedge_{i=1}^{I} d(x_{s,i})\in\hat\D(x_{s,i})\bigg),
\end{equation}
with the advantage that a separate multivariate Gaussian evaluation can be done now for each level set ($I\approx 25$). Computing this approximate probability as the system explores its state space provides a decision mechanism to guarantee safe operation of the system with a desired degree of confidence, which the system designer or operator can adjust through the $\gamma_0$ parameter. 

\subsubsection{Local Bayesian safety analysis}

Evaluating the expression in \eqref{eq:gp_lower_bound_marginal} is still computationally intensive, which can limit the practicality of this method for real-time validation of safety guarantees in some applications, such as mobile robots relying on on-board processing. An alternative is to replace the global safety analysis with a local criterion that offers much faster computation traded off with a weaker safety guarantee.

Instead of relying on Corollary \ref{cor:invariance}, this lighter method exploits Propositions \ref{prop:locally_correct} and \ref{prop:local_climb}. The system is allowed to explore the computed safe set freely as long as the probability of the estimated model $\hat\D$ being \emph{locally reliable} remains above a certain threshold $\lambda_0$; if this threshold is reached, the safe controller intervenes, and the system is guaranteed to locally maintain or increase the computed safety value $V(x)$ with probability no less than $\lambda_0$. While this local guarantee does not ensure safety globally, it does constitute a useful heuristic effort to prevent the system from entering unexplored and potentially unsafe regions of the state space. Further, although the method is not explicitly tracking the hypotheses of Corollary \ref{cor:invariance}, the local result becomes a global guarantee if these hypotheses do indeed hold.

We define the \emph{local safety confidence} $\lambda(x;X,{\bf \hat{d}^j})$, more concisely $\lambda(x)$, as the posterior probability that $d(x)$ will be contained in $\hat{\D}(x)$ at the current state $x$, given all observations made until now:
\begin{equation}\label{eq:lambda}
\lambda(x;X,{\bf \hat{d}^j}):=P\big(d(x)\in\hat{\D}(x)\mid {X, \bf \hat{d}}^j\big)
\end{equation}
We then have the following local safety certificate.

\begin{proposition}\label{prop:local_model_confidence}
Let the disturbance $d(\cdot)$ be distributed component-wise as $n_d$ independent Gaussian processes \eqref{eq:GP}.
The safety policy $\kappa^*(\cdot)$ is guaranteed to locally maintain or increase the system's computed safety $V(\cdot)$
with probability greater than or equal to the local safety confidence $\lambda(x)$.
\begin{proof}
The proof follows directly from Propositions \ref{prop:locally_correct} and \ref{prop:local_climb}, and the definition of $\lambda(x)$, noting that the boundary of $\Dx$ has zero Lebesgue measure and thus under any Gaussian distribution $P(d\in\interior\Dx|d\in\Dx)=1$.
\end{proof}
\end{proposition}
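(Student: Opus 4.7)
The plan is to decompose the event ``$\kappa^*$ locally maintains or increases $V$'' into conditions on the disturbance at the current state $x$, and then lower-bound its probability by $\lambda(x)$ using the ingredients already available.

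First I would observe that, since $\hat\D(x)$ is a compact hyperrectangle in $\RR^{n_d}$ (see \eqref{eq:Dx}) and $d(x)$ has a continuous (Gaussian) posterior density, the boundary $\partial\hat\D(x)$ has zero Lebesgue measure. Hence $P\bigl(d(x)\in\partial\hat\D(x)\mid X,\hat{\bf d}^j\bigr)=0$, and therefore
\[
P\bigl(d(x)\in\interior\hat\D(x)\mid X,\hat{\bf d}^j\bigr) \;=\; P\bigl(d(x)\in\hat\D(x)\mid X,\hat{\bf d}^j\bigr) \;=\; \lambda(x).
\]
This step replaces closed-set membership by open-set (strict-interior) membership without losing probability mass, which is what we need to invoke Proposition \ref{prop:locally_correct}.

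Next I would condition on the event $d(x)\in\interior\hat\D(x)$. Since $\hat\D$ inherits Hausdorff-Lipschitz continuity from the Gaussian process posterior (as stated at the end of Section \ref{subsec:GP} and proved in the Appendix), Proposition \ref{prop:locally_correct} produces a $\Delta t>0$ such that along every admissible trajectory $\xi(\cdot)$ starting at $x$ one has $d(\xi(\tau))\in\hat\D(\xi(\tau))$ for all $\tau\in[t,t+\Delta t]$. In other words, the model is locally reliable on a positive-measure time interval. Under this local reliability, Proposition \ref{prop:local_climb} (applied pointwise along the trajectory under the safe control $\kappa^*$) gives $L_f^* V(\xi(\tau))\ge 0$ throughout $[t,t+\Delta t]$; an alternative, essentially equivalent, route is to apply Corollary \ref{cor:invariance} to the level set $\Q_{V(x)}$, which is invariant under $\kappa^*$ as long as the disturbance remains inside $\hat\D$ on its boundary. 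Either way, $V(\xi(\tau))\ge V(x)$ for $\tau\in[t,t+\Delta t]$, i.e.\ $V$ is locally maintained or increased.

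Finally, combining the two steps, the event ``$\kappa^*$ locally maintains or increases $V$'' contains the event ``$d(x)\in\interior\hat\D(x)$'', so its posterior probability is at least $\lambda(x)$, which is what we wanted. The only nontrivial point in the argument is the transition from an instantaneous property of $d(x)$ to a uniform-in-time guarantee along the trajectory; this is precisely why we needed Proposition \ref{prop:locally_correct}, and why the interior-vs.-boundary distinction (handled by the Gaussian measure-zero remark) is essential rather than cosmetic.
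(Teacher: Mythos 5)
Your argument is correct and is essentially an expanded version of the paper's own one-line proof: both pass from $\lambda(x)$ to the interior event via the measure-zero boundary of $\hat\D(x)$, invoke Proposition \ref{prop:locally_correct} to get local reliability over a positive time interval, and use Proposition \ref{prop:local_climb} (ultimately Proposition \ref{prop:level_set} and the definition of $\kappa^*$) to conclude that $V$ is locally maintained or increased. The only caveat concerns your parenthetical ``alternative route'': Corollary \ref{cor:invariance} requires $d(x)\in\hat\D(x)$ on the \emph{entire} level set $\Q_{V(x)}$, a global hypothesis that local reliability along the trajectory does not supply, so the primary route through Proposition \ref{prop:local_climb} is the one to keep.
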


A \emph{local confidence threshold} $\lambda_0\in(0,p)$ can be established such that whenever $\lambda(x)<\lambda_0$
the model is considered insufficiently reliable (reachability guarantees may fail locally with probability greater than $1-\lambda_0$), and the safety control is applied. 
The proposed safety control strategy is therefore as follows:
\begin{equation}\label{eq:modstrategy}
\kappa(x) =
\begin{cases}
\kappa_l(x), & \text{if } \big(V(x) > 0\big) \wedge \big(\lambda(x) > \lambda_0\big),\\
\kappa^*(x), & \text{otherwise,}
\end{cases}
\end{equation}
Similarly to \eqref{eq:global_strategy}, under this control law, if confidence on the local reliability of the model is restored after applying the safe action and making new observations, the system will be allowed to resume its learning process, as long as it is in the interior of the computed safe set. 

After generating a new Gaussian process model and defining $\hat{\D}(x)$ as described in Section \ref{subsec:GP}, the prior probability with which the disturbance function $d(x)$ belongs to the set $\hat{\D}(x)$ is by design $p$ everywhere in the state space. As the system evolves, more evidence is gathered in the form of measurements of the disturbance along the system trajectory, so that the belief that $d(x)\in\hat{\D}(x)$ is updated for each $x$. In particular, in the Gaussian process model, this additional evidence amounts to augmenting the covariance matrix $K^j$ in \eqref{eq:posterior} with additional data points and reevaluating the mean and variance of the posterior distribution of $d(x)$. 
Based on the new Gaussian distribution, $\lambda(x;X,{\bf \hat{d}^j})$ can readily be evaluated for each $x$ as
\begin{equation} \label{eq:erf}
\fixwidth{\lambda(x)=
\!\prod_{j=1}^{n_d}\frac{1}{2}\left[\text{erf}\left(\frac {d_+^j(x)-m^j(x)}{s^j(x)\sqrt{2}}\right)-\text{erf}\left(\frac {d_-^j(x)-m^j(x)}{s^j(x)\sqrt{2}}\right)\right],}
\end{equation}
with ${d_+^j(x)=\bar{d}^j(x )+z\sigma^j(x)}$, ${d_-^j(x)=\bar{d}^j(x )-z\sigma^j(x)}$, ${m^j(x)=\mathbb{E}[d^j(x)| X,{\bf  \hat{d}}^j]}$, ${s^j(x)=\sqrt{\text{var}(d^j(x) | X)}}$; recall that $z$ was defined to yield the desired probability mass $p$ in $\Dx$ at the time of safety computation, as per \eqref{eq:Dx}.

Parameters $p$ and $\lambda_0$ (or, in its case, $\gamma_0$) allow the system designer to choose the degree of conservativeness in the system: while $p$ regulates the amount of uncertainty accounted for by the robust model-based safety computation, $\lambda_0$ ($\gamma_0$) determines the acceptable degradation in the resulting certificate's posterior confidence before a safety intervention is initiated.
A value of $p$ close to 1 will lead to a large, high-confidence $\Dx$ throughout the state space, but this analysis may result in a small or even empty safe set; on the other hand, if $p$ is low, $\Dx$ will be smaller and the computed safe set will be larger, but guarantees are more likely to be deemed unreliable (as per $\lambda_0$ or $\gamma_0$) in light of later observations.

In the case of local safety analysis, immediately after computing a new model $\hat\D$, $\lambda(x)$ is by construction equal to $p$ everywhere in the state space. As more measurements are obtained, the posterior distribution over the disturbance changes, 
as illustrated in Fig. \ref{fig:disturbance_distr},
which can result in $\lambda(x)$ locally increasing or decreasing. If $\lambda_0$ is chosen to be close to $p$, it is likely that the safety override will take place under minor deviations with respect to the model's prediction; as $\lambda_0$ becomes lower, however, the probability that the disturbance will violate the modeling assumptions before the safety controller intervenes increases. This reflects the fundamental tradeoff between risk and conservativeness in safety-critical decision making under uncertainty. The proposed framework therefore allows the system designer to adjust the degree of conservativeness according to the needs and characteristics of the system at hand.

\begin{figure}
\begin{center}
\includegraphics[trim= .4cm 1cm .5cm 5.5cm, clip, width=.48\textwidth]{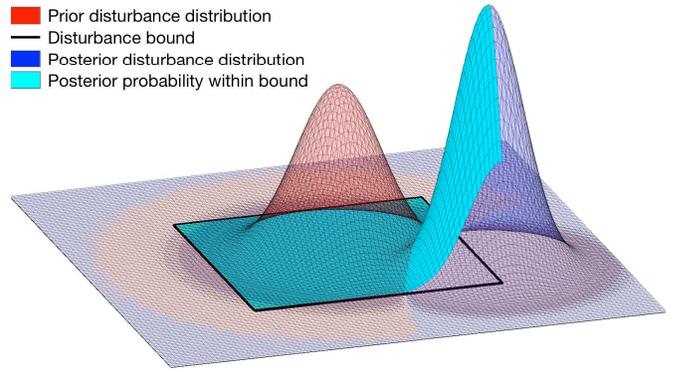}
\caption{Evolution of the probability distribution of the disturbance $d(x)$ at a particular state $x$. The prior distribution is used to compute the bound $\Dx$ using confidence intervals, such that it contains a specified probability mass $p$. As more data are obtained, the distribution may shift, leading to a different posterior probability mass contained within $\Dx$. \label{fig:disturbance_distr}}
\end{center}
\end{figure} 

\section{Experimental Results}\label{sec:results}
We demonstrate our framework on a
practical application 
with an autonomous quadrotor helicopter learning a flight controller in different scenarios. Our method is tested on the Stanford-Berkeley Testbed of Autonomous Rotorcraft for Multi-Agent Control (STARMAC), using Ascending Technologies Pelican and Hummingbird quadrotors (Fig. \ref{fig:quad}). The system receives full state feedback from a VICON motion capture system. For the purpose of this series of experiments, the vehicle's dynamics are approximately decoupled through an on-board controller responsible for providing lateral stability around hover and vertical flight; our framework is then used to learn the feedback gains for a hybrid vertical flight controller. The learning and safety controllers were implemented and executed in MATLAB, on a Lenovo Thinkpad with an Intel~i7  processor
that communicated wirelessly with the vehicle's 1.99 GHz Quadcore Intel Atom processor.
This was all done using the Indigo version of the Robot Operating System (ROS) framework.
Reachability computations are executed using the Level Set Toolbox \cite{Mitchell2005a},
\new{%
employing the Lax-Friedrich approximation 
for the numerical Hamiltonian; 
a weighted essentially nonoscillatory scheme 
for spatial derivatives; 
and a third-order total variation diminishing Runge-Kutta scheme for the time derivative~%
\cite{Osher2003},~\cite{Shu1988}.
}
\new{%
Once the safety function and safety policy have been computed, they are stored as look-up tables that can be quickly consulted in constant time.%
}%

The purpose of the results presented here is not to advance the state of the art of quadrotor flight control or reinforcement learning techniques, but to illustrate how the proposed method can allow safe execution of an arbitrary learning-based controller without requiring any particular convergence rate guarantees. To fully demonstrate the reliability of our safe learning framework, in our first setup we let the vehicle begin its online learning in mid-air starting with a completely untrained controller.
The general functioning of the framework can be observed in the second flight experiment, in which the vehicle starts with a conservative model and iteratively computes empirical estimates of the disturbance, gradually expanding its computed safe set
while avoiding overreliance on poor predictions.
Finally, we include an experiment in which an unexpected disturbance is introduced into the system. The vehicle reacts by immediately applying the safe action dictated by its local safety policy and retracting from the perturbed region, successfully maintaining safety throughout its trajectory. We show how the absence of online guarantee validation in the same scenario can result in loss of safety.

We use an affine dynamical model of quadrotor vertical flight, with state equations:
\begin{equation}\label{eq:quad_dyn}
\begin{split}
\dot{x}_1 &= x_2\\
\dot{x}_2 &= k_T u + g + k_0 + d(x)\\
\end{split}
\end{equation}
where $x_1$ is the vehicle's altitude, $x_2$ is its vertical velocity, and $u\in[0,1]$ is the normalized motor thrust command. The gravitational acceleration is $g=-9.8$ m/s$^2$. The parameters of the affine model $k_T$ and $k_0$ are determined for the Pelican and the Hummingbird vehicles through a simple on-the-ground experimental procedure---a scale is used to measure the normal force reduction for different values of $u$.
The state constraint ${\K=\{x: \text{0 m} \le x_1\le \text{2.8 m}\}}$ encodes the position of the floor and the ceiling, which must be avoided.
Finally, $d$ is an unknown, state-dependent scalar disturbance term representing unmodeled forces in the system.
\new{
We learn $d(x)$ using a Gaussian process model, and generate $\Dx$ as the marginal 95\% confidence interval at each $x$.
We implement \emph{local} Bayesian guarantee validation,
conservatively approximating \eqref{eq:erf} by assuming
${s^j(x):=\sqrt{\text{var}(d^j(x) | X)}\approx \sqrt{\text{var}(d^j(x) | X_\text{old})}}$,
that is, neglecting the (favorable but often small) reduction in uncertainty due to $X_\text{new}$.
This was done for ease of prototyping.%
}%

As the learning-based controller, we choose an easily implementable policy gradient reinforcement learning algorithm~\cite{Kolter2009}, which learns the weights for a linear mapping from state features to control commands. Following~\cite{Gillula2012a}, we define different features for positive and negative velocities and position errors, since the (unmodeled) rotor dynamics may be different in ascending and descending flight. This can be seen as the policy gradient algorithm learning the feedback gains for a hybrid proportional-integral-derivative (PID) controller.

\subsection{From Fall to Flight}

To demonstrate the strength of Hamilton-Jacobi-based guarantees for safely performing learning-based control on a physical system, we first require a Pelican quadrotor to learn an effective vertical trajectory tracking controller with an arbitrarily poor initialization. To do this, the policy gradient algorithm is initialized with all feature weights set to $0$. The pre-computed safety controller (numerically obtained using \cite{Mitchell2005a}) is based on a conservative uncertainty bound of $\pm1.5$~m/s$^2$ everywhere in the state space; no new bounds are learned during this experiment. The reference trajectory requires the quadrotor to aggressively alternate between hovering at two altitudes, one of them ($1.5$~m) near the center of the room, the other ($0.1$~m) close to the floor.

This first experiment illustrates the interplay between the \emph{learning controller} and the \emph{safety policy}. The \emph{iterative safety re-computation} and \emph{Bayesian guarantee validation} components of the framework are not active here.
Consistently, this demonstration uses \eqref{eq:least-restrictive} as the least-restrictive safe policy. 

The experiment, shown in Fig. \ref{fig:learn2fly}, is initialized with the vehicle in mid-air. Since all feature weights are initially set to zero, the vehicle's initial action is to enter free fall. However, as the quadrotor is accelerated by gravity towards the floor, the boundary of the computed safe set is reached, triggering the intervention of the safety controller, which automatically overrides the learning controller and commands the maximum available thrust to the motors ($u=1$), causing the vehicle to decelerate and hover at a small distance from the ground. For the next few seconds, there is some chattering near the boundary of the safe set, and the policy gradient algorithm has some occasions to attempt to control the vehicle when it is momentarily pushed into the interior of the safe set. Initially it has little success, which leads the safety controller to continually intervene to prevent the quadrotor from colliding with the floor; this has the undesirable effect of slowing down the learning process,
since observations under this interference are uninformative about the behavior of the vehicle when actually executing the commands produced by the learning controller (which is an ``on-policy" algorithm).
However, at approximately $t=40$~s, the learning controller is able to make the vehicle ascend towards its tracking reference, retaining control of the vehicle for a longer span of time and accelerating the learning process. By $t=60$~s, the quadrotor is approximately tracking the reference, with the safety controller only intervening during the aggressive descent phase of the repeated trajectory, to ensure (under its conservative model) that there is no risk of a ground collision. The controller continues to learn in subsequent iterations, overall improving its tracking accuracy.

The remarkable result in this experiment is not in the quality of the learned tracking controller after only a few seconds of active exploration (a merit that corresponds to the reinforcement learning method \cite{Kolter2009}), but the system's ability to achieve competent performance at its task from an extremely poor initial policy while remaining safe at all times.

\begin{figure}
\begin{center}
\includegraphics[trim= 0cm 9cm 0cm 5cm, clip, width=.48\textwidth]{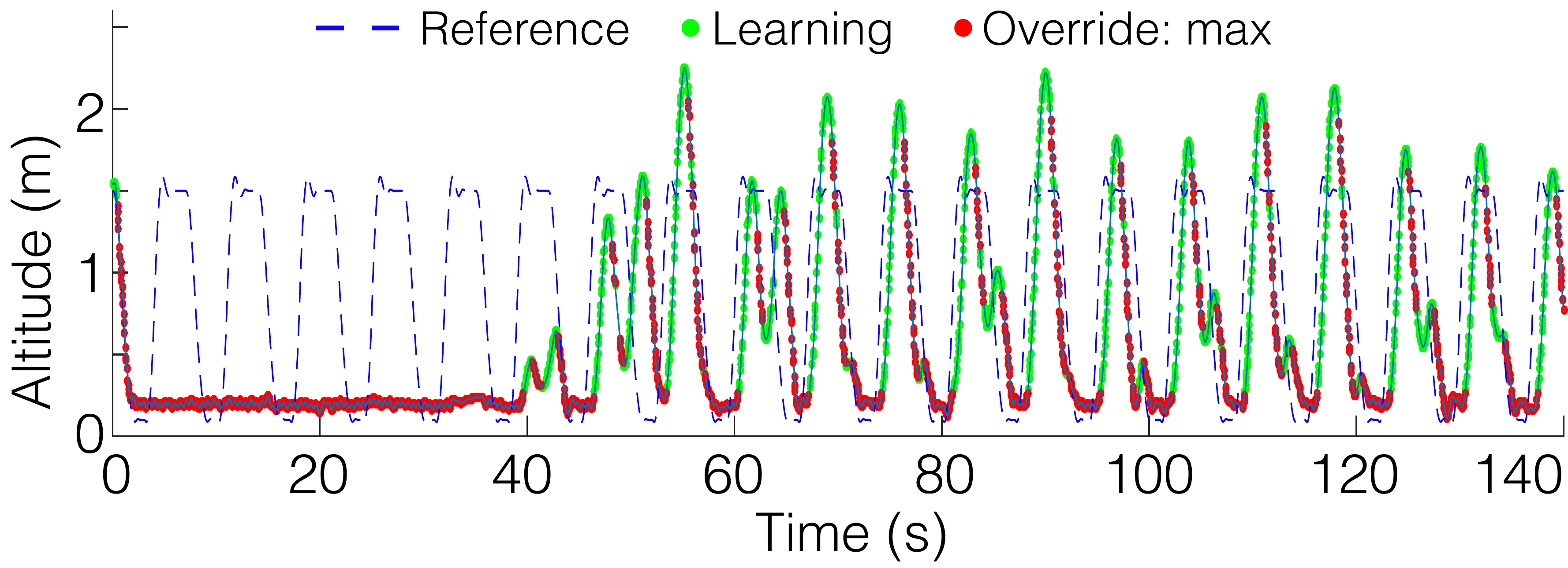}
\caption{Vehicle altitude and reference trajectory over time. Initial feedback gains are set to zero. When the learning controller (green) lets the vehicle drop, the safety control (red) takes over preventing a collision. Within a few seconds, the learned feedback gains allow rough trajectory tracking and are subsequently tuned as the vehicle attempts to minimize error.  \label{fig:learn2fly}}
\end{center}
\end{figure}

\subsection{When in Doubt}

In the second experiment, we demonstrate the iterative updating of the safe set and safety policy using observations of the system dynamics gathered over time, as well as the online validation of the resulting guarantees.
All components of the framework are active during the test, namely \emph{learning controller}, \emph{safety policy}, \emph{iterative safety re-computation}, and \emph{Bayesian guarantee validation}, with the main focus being on the latter two.

Here, the Pelican quadrotor attempts to safely track the same reference trajectory, while using the gathered information about the system's evolution to refine its notion of safety. 
In this case, the policy gradient learning algorithm is initialized to a hand-tuned set of parameter values.
The initial dynamic model available to the safety algorithm is identical to the one used in the previous experiment, with a uniform uncertainty bound of $\pm1.5$m/s$^2$. However, the system is now allowed to update this bound, throughout the state space, based on the disturbance posterior computed by a Gaussian process model.

To learn the disturbance function,
the system starts with a Gaussian process prior over $d(\cdot)$ defined by a zero mean function and a squared exponential covariance function:
\begin{equation}\label{eq:sqexp}
k(x,x')=\sigma_f^2 \exp \left(\frac{(x-x')^T\L^{-1}(x-x')}{2}\right),
\end{equation}
\noindent where $\L$ is a diagonal matrix, with $\L_i$ as the $i$th diagonal element, and $\theta_p=\begin{bmatrix}\sigma_f^2,\sigma_n^2,\L_{1},\L_{2} \end{bmatrix}$ are the hyperparameters, $\sigma_f^2$ being the signal variance, $\sigma_n^2$ the measurement noise variance, and the $\L_i$ the squared exponential's characteristic length for position and velocity respectively. The hyperparameters are chosen to maximize the marginal likelihood of the training data set, and are recomputed for each new batch of data when a new disturbance model $\hat\D(x)$ is generated for safety analysis.
Finally, the chosen prior mean and covariance kernel classes are both Lipschitz continuous, ensuring that all required technical conditions for the theoretical results hold (proofs are presented in the Appendix).

The expressions \eqref{eq:posterior}, \eqref{eq:single} give the marginal Gaussian process posterior on $d(x^*)$ for a query point $x^*$. To numerically compute the safe set, the system first evaluates \eqref{eq:Dx} to obtain the disturbance bound $\hat{\D}(\bf{x_i})$ at every point $\bf{x_i}$ on a state-space grid, as the $95\%$
confidence interval ($p=0.95$) of the Gaussian process posterior over $d(x)$; next, it performs the robust safety analysis by numerically solving the HJI equation \eqref{eq:HJI} on this grid (using \cite{Mitchell2005a}) and obtaining the safety function $V(x)$.

The trajectory followed by the quadrotor in this experiment is shown in Fig. \ref{fig:go_stop_go_traj}. The vehicle starts off with an \emph{a priori} conservative global bound on $d(x)$ and computes an initial conservative safe set $\Omega_1$ (Fig. \ref{fig:go_stop_go_sets}). It then attempts to track the reference trajectory avoiding the unsafe regions by transitioning to the safe control $u^*(x)$ on $\partial\Omega_1$.
The disturbance is measured and monitored online during this test,
under the local safety confidence criterion,
and found to be locally consistent with the initial conservative bound.
After collecting $10$~s of data, a new disturbance bound $\hat{\D}(x)$ is constructed using the corresponding Gaussian process posterior, from which
a second safety function $V_2(x)$ and safe set $\Omega_2$ are computed via Hamilton-Jacobi reachability analysis.
This process takes roughly $2$~seconds, and at approximately $t=12$~s the new safety guarantees and policy are substituted in.

The Pelican continues its flight under the results of this new safety analysis: however,
shortly after,
the vehicle measures values of $d$ that consistently approach the boundary of $\hat{\D}(x)$,
and reacts by
applying the safe control policy and locally climbing the computed safety function.
This confidence-based intervention
takes place several times during the test run, as the vehicle measures disturbances that lower its confidence in the local model bounds, effectively preventing the vehicle from approaching the ground.

After a few seconds, a new Gaussian process posterior is computed based on the first $20$~s of flight data, resulting in an estimated safe set $\Omega_3$, an intermediate result between the initial conservative $\Omega_1$ and the overly permissive $\Omega_2$ (Fig. \ref{fig:go_stop_go_sets}). The learning algorithm is then allowed to resume tracking under this new safety analysis, and no further safety overrides take place due to loss of safety confidence.

This experiment demonstrates the algorithm's ability to safely refine its notion of safety
as more data become available,
without requiring the process to consist in a
series 
of strictly conservative under-approximations.

\begin{figure}
\begin{center}
\includegraphics[trim= 0.75cm 0cm 1cm 0cm, clip, width=.48\textwidth]{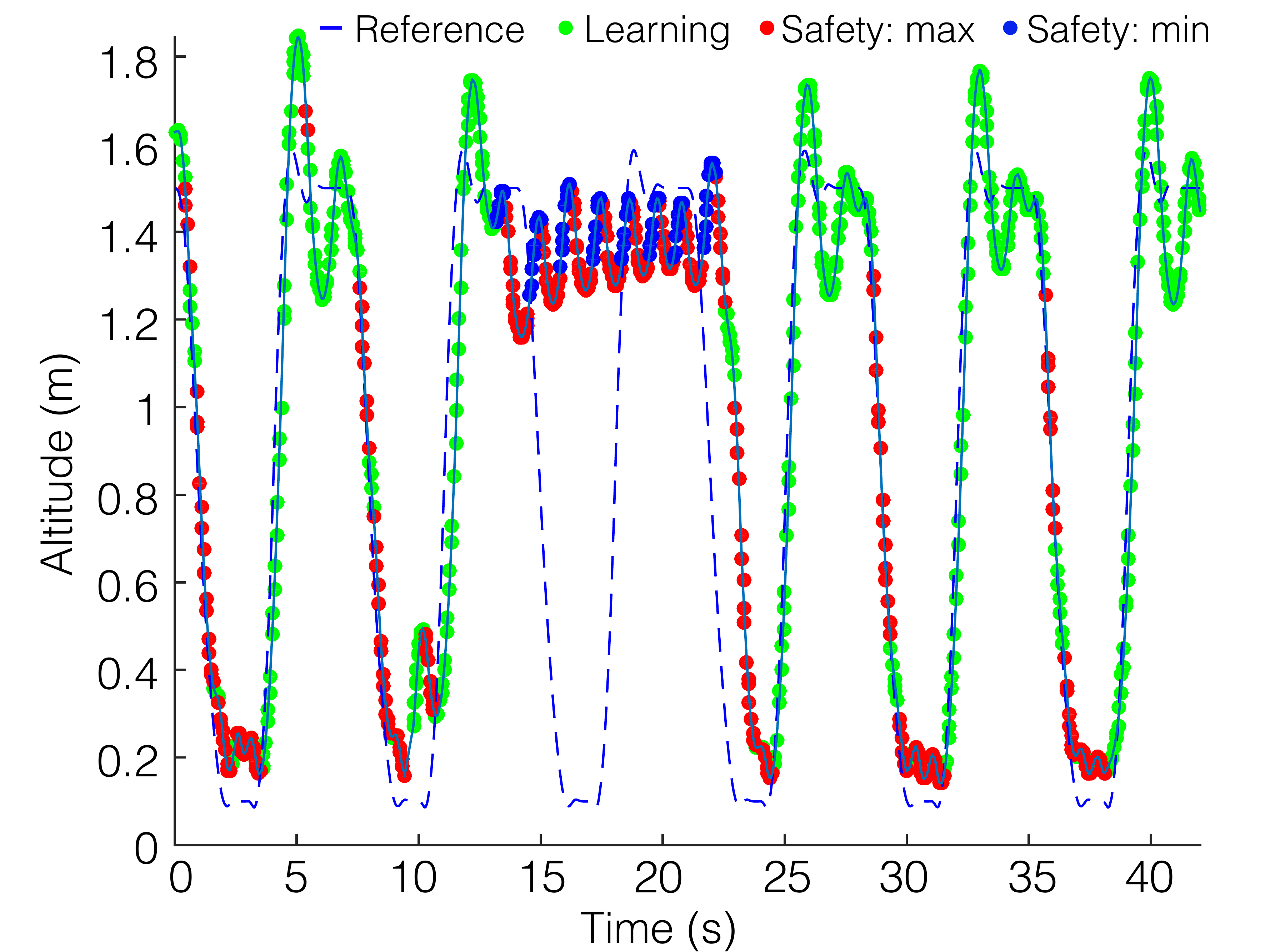}
\caption{Vehicle altitude and reference trajectory over time. After flying with an initial conservative model, the vehicle computes a first Gaussian process model of the disturbance with only a few data points, resulting in an insufficiently accurate bound. The safety policy detects the low confidence and refuses to follow the reference to low altitudes. Once a more accurate disturbance bound is computed, tracking is resumed, with a less restrictive safe set than the original one. \label{fig:go_stop_go_traj}}
\end{center}
\end{figure}

\begin{figure}
\begin{center}
\includegraphics[trim= 1.25cm 0cm 0.5cm 0cm, clip, width=.48\textwidth]{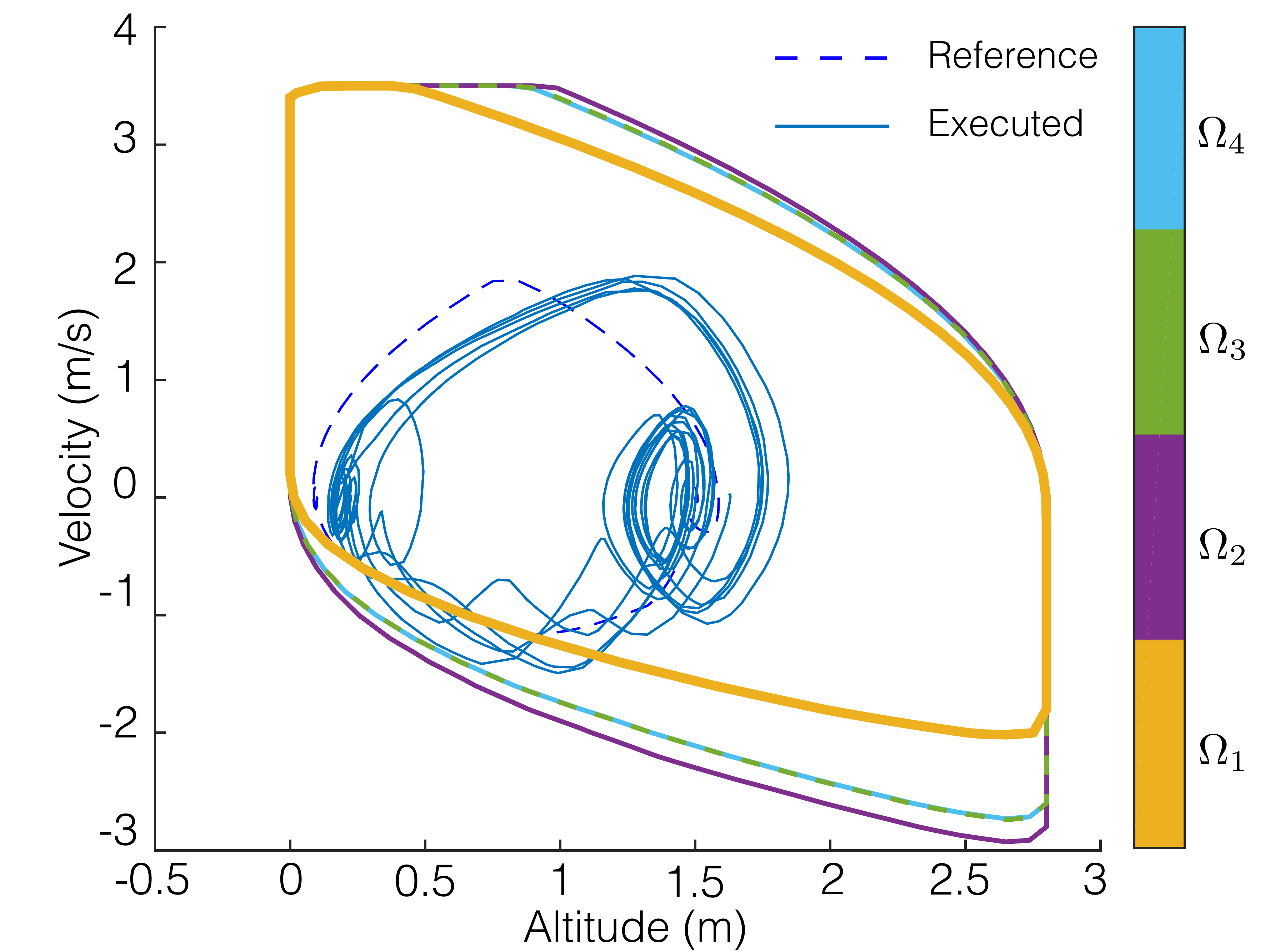}
\caption{Safe sets computed online by the safety algorithm as it gathers data and successively updates its Gaussian process disturbance model.
The vehicle's trajectory eventually leaves the initial, conservative $\Omega_1$, but remains in the converged safe set ($\Omega_4$) at all times, even \emph{before} this set is computed.
While the intermediate set $\Omega_2$ would have been overly permissive, this is remedied by the 
intervention of the safety controller as soon as the model is observed to behave poorly.\label{fig:go_stop_go_sets}}
\end{center}
\end{figure}

\subsection{Gone with the Wind}

In this last experimental result, we display the efficacy of online safety guarantee validation
in handling alterations in operating conditions unforeseen by the system designer. All components of the framework are active, except for the \emph{iterative safety re-computation}, which is not used in this case.

This experiment is performed using the lighter Hummingbird quadrotor, which is more agile than the Pelican but also more susceptible to wind.
We initialize the disturbance set to a conservative range of $\pm 2$~m/s$^2$, which amply captures the error in the double-integrator model for vertical flight.
The vehicle tracks a slow sinusoidal trajectory using policy gradient \cite{Kolter2009} to improve upon the manually initialized controller parameters. At approximately $t=45$~s an unmodeled disturbance is introduced by activating a fan aimed laterally at the quadrotor. The fan is positioned on the ground and angled slightly upward, so that its effect increases as the quadrotor flies closer to the ground. The presence of the airflow causes the attitude and lateral position controllers to use additional control authority to stabilize the quadrotor, which couples into the vertical dynamics as an unmodeled force.

The experiment is performed with and without the \emph{Bayesian guarantee validation} component, with resulting trajectories shown in Fig. \ref{fig:fan_time}. Without validation, the quadrotor violates the constraints, repeatedly striking the ground. With validation, the fan's airflow is quickly detected as a discrepancy with the model near the floor,
and the safety controller override is triggered.
The vehicle avoids entering the affected region for the remainder of the flight. Although only the local confidence method is used, providing a strictly local safety guarantee, the safe controller succeeds in maintaining safety throughout the experiment.
This provides strong evidence suggesting that, beyond its theoretical guarantees, the local Bayesian analysis also constitutes an effective best-effort approach to safety in more general conditions, given limited computational resources and available knowledge about the system.

\begin{figure}
\begin{center}
\includegraphics[trim= 1cm 0cm 1cm 0cm, clip, width=.49\textwidth]{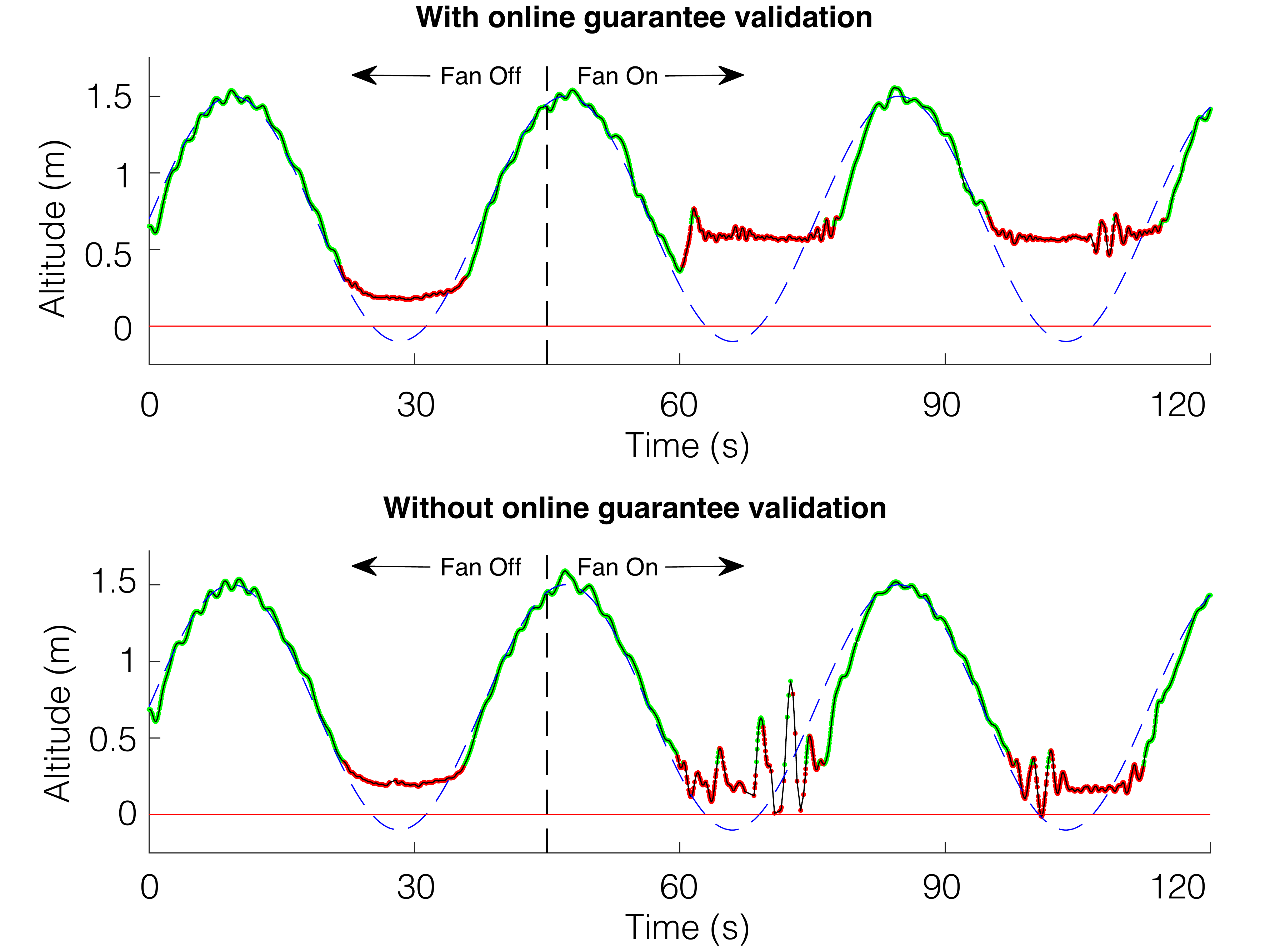}
\caption{Vehicle altitude and reference trajectory over time, shown with and without online model validation. After the fan is turned on, the vehicle checking local model reliability detects the inconsistency and overrides the learning controller, avoiding the region with unmodeled dynamics; the vehicle without model validation enters this region and collides with the ground multiple times. The behavior is repeated when the reference trajectory enters the perturbed region a second time. \label{fig:fan_time}}
\end{center}
\end{figure}

\section{Conclusion}

We have introduced a safe learning framework that combines robust reachability guarantees from control theory with Bayesian analysis based on empirical observations. This results in a minimally restrictive supervisory controller that can allow an arbitrary learning algorithm to safely explore its state and strategy spaces. As more data are gathered online, the framework allows the system to probabilistically reason about the validity of its robust model-based safety guarantees in light of the latest empirical evidence.

We firmly believe that providing strong and practically useful safety guarantees for systems that navigate unstructured environments requires a rapprochement between model-based and data-driven techniques, often regarded as a dichotomy by both theoreticians and practitioners. With this work we intend to provide mathematical arguments and empirical evidence of the potential that the two approaches hold
when used in conjunction.

Scaling up safety certificates as intelligent systems achieve increasing complexity poses an important open research problem. Our prediction is that, as autonomous systems interact increasingly closely with human beings, the graceful interplay of safety and learning, combining theoretical guarantees with empirical grounding, will become central to their success.
\newpage
\section*{Appendix}
Restricting one of the control inputs $ d$ to a \emph{state-dependent} bound $\Dx$ introduces questions as to whether a unique Carath\'eodory solution to \eqref{eq:xdot} continues to exist. The basic existence and uniqueness theorems \cite{Coddington1955} assume fixed control sets. If the variation in the control sets can instead be expressed through the dynamic equation itself without breaking the continuity conditions, then it is easy to extend the classical result to at least a class of space-dependent control sets. We introduce two technical assumptions, which give sufficient conditions for the existence and uniqueness of a solution to the dynamical equations, and prove that any disturbance set $\Dx$ obtained from a Gaussian process model with Lipschitz prior mean and covariance kernel satisfies these assumptions.

\begin{assumption}\label{ass:retract}
For all $x\in\RR^n$, $\hat\D(x)$ is a closed 
deformation retract of $\D$, that is, there exists a continuous map $H_x: \D\times[0,1]\to\hat\D(x)$ such that for every $ d\in\D$ and $\hat d\in\hat\D(x)$,
$H_x( d,0)= d, \, H_x( d,1)\in\hat\D(x), \, H_x(\hat d,1) = \hat d.$
\end{assumption}
\begin{assumption}\label{ass:r}
Let $r: \RR^n\times\D\to\D$ be such that ${r(x, d)=H_x( d,1)}$ as defined above. Then $r$ is Lipschitz continuous in $x$, and uniformly continuous in $ d$.
\end{assumption}

Intuitively, the first assumption means that $\D$ can be continuously deformed into $\Dx$ for any $x$, while the second prevents the disturbance bound $\hat\D(x)$ from changing abruptly as one moves in the state space.
The retraction map $r$ allows us to absorb the state-dependence of the disturbance bound into the system dynamics, enabling us to use the standard analysis for differential games, which considers measurable time signals drawn from fixed input sets. This is formalized in the following result.

\begin{proposition}
The saturated system dynamics $\tilde{f}_{\hat\D}(x,u, d):=f\big(x,u,r(x, d)\big)$ are bounded and uniformly continuous in all variables, and Lipschitz in $x$.

\begin{proof}
Boundedness and uniform continuity of $\tilde{f}_{\hat\D}$ in $u$ are trivially inherited from $f$; we therefore focus on $d$ and $x$.

First, since $r$ is uniformly continuous in $ d$, and $f$ is Lipschitz (hence uniformly continuous) in its third argument, we have by composition that $\tilde{f}_{\hat\D}$ is uniformly continuous in $ d$.

Lipschitz continuity in $x$ is less immediate due to its appearance in both the first and third arguments of $f$. Again by composition, Lipschitz continuity of $r$ in $x$ implies that $f\big(y,u,r(\cdot, d)\big)$ is also Lipschitz for all $y\in\RR^n$, $u\in\U$ and $ d\in\D$. Letting $L_r$ be the Lipschitz constant of $r$ and $L_x$ be the Lipschitz constant of $f$ in its first argument, we have, for any $x,\tilde x \in\RR^n$:
\[\begin{split} &|f(x,u,r(x, d)) - f(\tilde{x},u,r(\tilde{x}, d))| \\
&\le |f(x,u,r(x, d)) - f(\tilde{x},u,r(x, d))| + |f(\tilde{x},u,r(x, d)) - f(\tilde{x},u,r(\tilde{x}, d))| \\
&\le (L_x + L_ d L_r)|x - \tilde{x}| \end{split}\]
Thus $\tilde{f}_{\hat\D}(\cdot,u, d)=f(\cdot,u,r(\cdot, d))$ is indeed Lipschitz in $x$.
\end{proof}
\end{proposition}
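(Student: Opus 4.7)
The plan is to verify the three claimed properties separately, exploiting the fact that $\tilde{f}_{\hat\D}$ is just the composition $f\circ(\mathrm{id},\mathrm{id},r)$ in its three arguments. Boundedness is immediate: $\tilde{f}_{\hat\D}$ takes values in the image of $f$, which is bounded by assumption. Uniform continuity in $u$ is also immediate, since the retraction $r$ does not depend on $u$, so $\tilde{f}_{\hat\D}(x,\cdot,d) = f(x,\cdot,r(x,d))$ inherits uniform continuity directly from the third slot being fixed and $f$ being uniformly continuous in $u$.

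For uniform continuity in $d$, I would argue by composition. Assumption~\ref{ass:r} gives uniform continuity of $r(x,\cdot)$, and Lipschitz continuity of $f$ in its third argument (part of the standing hypothesis on $f$) implies uniform continuity there. Composing two uniformly continuous maps preserves uniform continuity, which yields the claim.

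The main obstacle is Lipschitz continuity in $x$, since $x$ appears in both the first and third argument slots of $f$ through $\tilde{f}_{\hat\D}(x,u,d)=f(x,u,r(x,d))$. The standard device here is the triangle-inequality split: insert an intermediate term $f(\tilde{x},u,r(x,d))$ and write
\[
|\tilde{f}_{\hat\D}(x,u,d)-\tilde{f}_{\hat\D}(\tilde{x},u,d)|\le |f(x,u,r(x,d))-f(\tilde{x},u,r(x,d))|+|f(\tilde{x},u,r(x,d))-f(\tilde{x},u,r(\tilde{x},d))|.
\]
The first term is controlled by Lipschitz continuity of $f$ in its first argument, with constant $L_x$. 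The second term is controlled by Lipschitz continuity of $f$ in its third argument with constant $L_d$, combined with Lipschitz continuity of $r$ in $x$ with constant $L_r$ (Assumption~\ref{ass:r}), giving a factor $L_d L_r$. Adding these produces a total Lipschitz constant $L_x+L_d L_r$, independent of $u$ and $d$.

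I expect no subtlety beyond keeping track of which Lipschitz constant comes from which hypothesis; the key conceptual point is simply that absorbing the state dependence of the disturbance set into $r$ does not destroy regularity, precisely because Assumptions~\ref{ass:retract}--\ref{ass:r} were designed so that $r$ has exactly the regularity $f$ needs in its third slot in order for the composition to remain well behaved.
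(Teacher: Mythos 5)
Your proposal is correct and follows essentially the same route as the paper's proof: boundedness and uniform continuity in $u$ and $d$ by composition with $r$, and Lipschitz continuity in $x$ via the triangle-inequality split through the intermediate term $f(\tilde{x},u,r(x,d))$, yielding the same constant $L_x + L_d L_r$. No gaps to report.
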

\begin{corollary}
The dynamical system given by \eqref{eq:xdot} with $ d$ constrained to lie in a state-dependent set $\Dx$ satisfying Assumptions \ref{ass:retract} and \ref{ass:r} has a unique continuous solution in the extended (Carath\'eodory) sense.
\end{corollary}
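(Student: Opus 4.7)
The plan is to lift the state-constrained problem to an unconstrained Carathéodory ODE via the retraction $r$, then invoke the classical existence-uniqueness theorem. The preceding proposition does the heavy lifting: it shows that the saturated right-hand side $\tilde{f}_{\hat\D}(x,u,d) = f(x,u,r(x,d))$ is bounded, uniformly continuous in all arguments, and Lipschitz in $x$ uniformly in $(u,d) \in \U \times \D$. These are precisely the hypotheses of the classical theorems (\cite{Coddington1955}, Ch.~2, Thms.~1.1, 2.1), applied with inputs drawn from the \emph{fixed} compact sets $\U, \D$ rather than from a state-dependent set.

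First, for any fixed measurable $\bm u \in \UU$ and $\bm d \in \DD$, I would consider the auxiliary initial value problem
\[
\dot\xi(s) = \tilde{f}_{\hat\D}\big(\xi(s), \bm u(s), \bm d(s)\big), \qquad \xi(0) = x.
\]
The map $(s,\xi) \mapsto \tilde{f}_{\hat\D}(\xi, \bm u(s), \bm d(s))$ is measurable in $s$ (a continuous function composed with measurable signals), continuous and Lipschitz in $\xi$ uniformly in $s$, and uniformly bounded. The classical Carathéodory theorem therefore yields a unique absolutely continuous trajectory $\xi$ satisfying the equation almost everywhere on $[0,\infty)$.

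Second, I would establish a bijective correspondence between solutions of the saturated system and admissible trajectories of the original constrained problem \eqref{eq:xdot}. Given a saturated solution driven by $\bm d$, define $\bm{d}'(s) := r\big(\xi(s), \bm d(s)\big)$. Assumption~\ref{ass:retract} (specifically $H_x(\cdot,1) \in \hat\D(x)$) guarantees $\bm{d}'(s) \in \hat\D(\xi(s))$ for every $s$, and measurability of $\bm{d}'$ follows because $r$ is continuous (Assumption~\ref{ass:r}) and $s \mapsto (\xi(s), \bm d(s))$ is measurable. Thus $\xi$ is a valid trajectory of \eqref{eq:xdot} under admissible disturbance $\bm{d}'$. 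Conversely, if $\tilde\xi$ is any trajectory of the original constrained problem driven by some $\bm{d}' \in \DD$ with $\bm{d}'(s) \in \hat\D(\tilde\xi(s))$ almost everywhere, the retract fixed-point identity $H_x(\hat d,1) = \hat d$ gives $r(\tilde\xi(s), \bm{d}'(s)) = \bm{d}'(s)$ a.e., so $\tilde\xi$ solves the saturated equation with input $\bm d = \bm{d}'$, and by the uniqueness obtained above must coincide with $\xi$.

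The main obstacle is conceptual rather than computational: one must verify that the classical Carathéodory framework applies even though the admissible disturbance set varies with state. The key is that $r$ absorbs this state-dependence into the vector field without breaking the regularity hypotheses, so that everything reduces to the standard theorem on a fixed input set. The only remaining technical subtleties are the measurability checks for composed signals and for the retracted disturbance, both of which are handled by continuity of $r$ together with the defining fixed-point property of the deformation retract.
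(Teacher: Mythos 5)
Your proposal is correct and follows essentially the same route the paper intends: the corollary is stated without an explicit proof precisely because it is meant to follow from the preceding proposition on $\tilde{f}_{\hat\D}$ together with the classical Carath\'eodory theorems of \cite{Coddington1955} for fixed input sets. Your added detail on the two-way correspondence between saturated solutions and admissible constrained trajectories (via $H_x(\hat d,1)=\hat d$) makes explicit a step the paper leaves implicit, but it is the same argument.
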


The above result is important in that it guarantees existence and uniqueness of system trajectories for any state-dependent disturbance bound $\hat \D(\cdot)$ that satisfies Assumptions \ref{ass:retract} and \ref{ass:r}. Moreover, the above construction allows us to transform the system dynamics $f(x,u, d)$ with $ d\in\hat\D(x)$ into the standard form with fixed input sets (i.e. $\tilde{f}_{\hat\D}(x,u, d)$ with $u\in\U$, $ d\in\D$), so that all results from the differential games literature can readily be applied to our formulation.

Let us now see that the posterior mean and standard deviation of the components of $d(x)$ are Lipschitz continuous functions of the state $x$ under our Gaussian process framework.

\begin{proposition}\label{prop:Lipschitz}
Let the prior mean function $\mu^j$ be Lipschitz continuous, and the covariance kernel $k^j$ be jointly Lipschitz continuous,
for the $j$th component of the disturbance function $d(x)$. Then the posterior mean $\bar d^j(x)$ and standard deviation $\sigma^j(x)$, as given by \eqref{eq:posterior}, \eqref{eq:single} are Lipschitz continuous in $x$.

\begin{proof}
The result follows from applying the hypotheses to \eqref{eq:posterior}, \eqref{eq:single}. Note that the standard deviation $\sigma^j(x)$ is the square root of the variance in \eqref{eq:single}; the square root function is Lipschitz everywhere except at 0, and Bayesian inference under nondegenerate prior and likelihood never results in 0 posterior variance. Thus $\sigma^j(\cdot)$ is also Lipschitz. 
\end{proof}
\end{proposition}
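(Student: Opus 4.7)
The plan is to read the posterior formulas as compositions of Lipschitz building blocks and exploit the fact that, once the training data $X$ and $\mathbf{\hat d^j}$ are fixed, the matrix $A := (K^j(X,X)+(\sigma_n^j)^2 I)^{-1}$ and the vector $\mathbf{\hat d}^j - \mu^j(X)$ are constants. The only $x$-dependence lives in the prior mean $\mu^j(x)$, in the cross-covariance vector $k_*(x) := K^j(x,X)$, and in the diagonal evaluation $k^j(x,x)$.

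First I would handle the mean. Write
\[
\bar{d^j}(x) \;=\; \mu^j(x) \;+\; \sum_{i=1}^N \alpha_i\, k^j(x,x_i),
\]
where $\alpha_i$ is the $i$-th component of $A(\mathbf{\hat d^j}-\mu^j(X))$. By hypothesis $\mu^j$ is Lipschitz; each map $x\mapsto k^j(x,x_i)$ is Lipschitz as the composition of the Lipschitz partial injection $x\mapsto (x,x_i)$ with the jointly Lipschitz $k^j$. A finite linear combination of Lipschitz functions is Lipschitz, so $\bar{d^j}$ is Lipschitz with constant at most $L_{\mu^j}+\sum_i |\alpha_i| L_{k^j}$.

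Next I would handle the variance
\[
(\sigma^j(x))^2 \;=\; k^j(x,x)\;-\;k_*(x)^\top A\, k_*(x).
\]
The first term is Lipschitz by the diagonal composition argument above. For the quadratic form, the vector map $x\mapsto k_*(x)$ is Lipschitz component-wise, hence Lipschitz in the Euclidean norm. On any bounded region of state space (which is the relevant setting for our grid-based reachability computation), $k_*(x)$ ranges in a bounded set and the bilinear form $v\mapsto v^\top A v$ has a bounded derivative, giving Lipschitz continuity of the quadratic term; globally the same holds whenever the kernel is bounded, as in the squared-exponential case \eqref{eq:sqexp}. Thus $(\sigma^j)^2$ is Lipschitz.

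Finally I would pass from variance to standard deviation via $\sigma^j(x)=\sqrt{(\sigma^j(x))^2}$. The map $t\mapsto\sqrt{t}$ is Lipschitz on $[\varepsilon,\infty)$ for any $\varepsilon>0$, so it suffices to bound the posterior variance below by a strictly positive constant. This follows from the fact that $A$ is a fixed positive definite matrix and $K^j(X,X)+(\sigma_n^j)^2 I \succeq (\sigma_n^j)^2 I$, which in combination with the Schur-complement form of the conditional covariance ensures $(\sigma^j(x))^2 \ge$ some positive constant uniformly in $x$ (intuitively, a single new query point can never be perfectly predicted from finitely many noisy observations). Composing $\sqrt{\cdot}$ with a Lipschitz, uniformly positive function yields a Lipschitz $\sigma^j$, completing the proof.

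The main obstacle I anticipate is the uniform positive lower bound on the posterior variance, which is needed to make $\sqrt{\cdot}$ Lipschitz. This is a clean linear-algebra fact but it is the one place where a naive "Lipschitz composition" argument breaks down and one must invoke the nondegeneracy of the prior/likelihood; the other steps are essentially algebraic manipulations of the posterior formulas \eqref{eq:posterior}--\eqref{eq:single}.
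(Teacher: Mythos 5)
Your proof is correct and follows essentially the same route as the paper's (much terser) argument: read the $x$-dependence of the posterior formulas through the Lipschitz mean and kernel, and handle the square root by invoking nondegeneracy of the posterior variance. You fill in details the paper leaves implicit --- notably the boundedness needed for the quadratic form and the uniform (not merely pointwise) positive lower bound on the variance required for $\sqrt{\cdot}$ to be Lipschitz --- both of which hold for the stationary kernels used here, so no gap.
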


The following proposition relates the state-dependent bound $\Dx$ obtained from Gaussian process regression to Assumptions \ref{ass:retract} and \ref{ass:r}, ensuring that the dynamical system \eqref{eq:xdot} is well-defined, and therefore the associated dynamic game be solved using the methods presented in Section \ref{sec:analysis}.

\begin{proposition}\label{prop:assumptions}
Let the prior mean function $\mu^j$ be Lipschitz continuous, and the covariance kernel $k^j$ be jointly Lipschitz continuous in its two variables, for all components $j$ of the disturbance function $d(x)$. Then the disturbance bound $\Dx$, as defined in \eqref{eq:Dx}, satisfies Assumptions \ref{ass:retract} and \ref{ass:r}.

\begin{proof}
Assumption \ref{ass:retract} holds independently of the Lipschitz condition. The bound $\Dx$ given by \eqref{eq:Dx} is a compact convex set in $\D$. As a result, the retraction map $\pi_x:\D\to\Dx$ assigning every $ d\in\D$ its (unique) nearest point in $\Dx$ is a Lipschitz continuous function (with Lipschitz constant equal to 1); of course with $\pi_x(\hat d)=\hat d$ for all $\hat d\in\Dx$. Then, the function $H_x( d,t):= (1-t)  d + t \pi_x( d)$ is continuous by composition and further satisfies $H_x( d,0)= d, H_x( d,1)\in\hat\D(x), H_x(\hat d,1) = \hat d$ for all $ d\in\D$ and $\hat d\in\Dx$.

Assumption \ref{ass:r} can be shown to hold by noting that the extrema of each of the intervals in \eqref{eq:Dx} are affine in $\bar d^j(x)$ and $\sigma^j(x)$, which are Lipschitz continuous in $x$ by Proposition~\ref{prop:Lipschitz}. This implies that the position of all vertices of the hyperrectangle in \eqref{eq:Dx} varies as a Lipschitz continuous function of $x$, and so does, as a result, the nearest point in $\Dx$ to any fixed $ d\in\D$. The map $r(x, d)=\pi_x( d)$ is hence Lipschitz continuous in $x$. Finally, since $\pi_x$ is Lipschitz continuous in $ d$, $r$ is also uniformly continuous in $d$.
\end{proof}
\end{proposition}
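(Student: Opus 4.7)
The plan is to verify the two required assumptions separately, exploiting both the convexity and the product structure of the hyperrectangle $\Dx$ given in \eqref{eq:Dx}. I would first observe that \eqref{eq:Dx} expresses $\Dx$ as a Cartesian product of closed bounded intervals, hence as a compact convex subset of $\D$. This observation is what enables both parts of the argument: convexity takes care of Assumption \ref{ass:retract}, and the product structure lets me reduce Assumption \ref{ass:r} to a one-dimensional calculation.

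For Assumption \ref{ass:retract}, no Lipschitz hypothesis is needed. On any compact convex set in Euclidean space the metric projection $\pi_x : \D \to \Dx$ is well-defined, single-valued, and fixes points of $\Dx$. I would then take the straight-line homotopy $H_x(d,t) := (1-t)\, d + t\, \pi_x(d)$, which is continuous and satisfies $H_x(d,0) = d$, $H_x(d,1) = \pi_x(d) \in \Dx$, and $H_x(\hat d, 1) = \hat d$ for every $\hat d \in \Dx$, exhibiting $\Dx$ as a deformation retract of $\D$.

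For Assumption \ref{ass:r}, setting $r(x,d) = \pi_x(d)$, the uniform continuity in $d$ is immediate because projection onto a closed convex set is $1$-Lipschitz in its argument. The nontrivial step is Lipschitz continuity in $x$. Here I would exploit the fact that projection onto a hyperrectangle decomposes component-wise into a clipping operation, namely $[\pi_x(d)]^j = \min\{\max\{d^j,\, \bar d^j(x) - z\sigma^j(x)\},\, \bar d^j(x) + z\sigma^j(x)\}$. Since $\min$, $\max$, and affine combinations are $1$-Lipschitz in their arguments, and Proposition~\ref{prop:Lipschitz} guarantees that $\bar d^j(x)$ and $\sigma^j(x)$ are Lipschitz functions of $x$, the composition is Lipschitz in $x$ uniformly in $d$, and combining components gives the full result in $\RR^{n_d}$.

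The hardest part is not in this verification itself but in the invocation of Proposition~\ref{prop:Lipschitz}: the posterior standard deviation $\sigma^j(x)$ is the square root of the posterior variance, and the square root is only Lipschitz away from zero. This is resolved by the fact that Bayesian updating with a nondegenerate Gaussian prior and Gaussian likelihood never produces zero posterior variance, so the singular point of the square root is bounded away from. Once this technicality is dispatched, the rest of the argument amounts to composing Lipschitz operations, and the two assumptions follow.
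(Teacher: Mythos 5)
Your proposal follows essentially the same route as the paper's proof: Assumption \ref{ass:retract} via compactness and convexity of the hyperrectangle together with the straight-line homotopy onto the nearest-point retraction, and Assumption \ref{ass:r} via $1$-Lipschitzness of the projection in $d$ and Lipschitz dependence of the interval endpoints on $x$ through Proposition~\ref{prop:Lipschitz}. Your explicit component-wise clipping formula $[\pi_x(d)]^j = \min\{\max\{d^j,\, \bar d^j(x) - z\sigma^j(x)\},\, \bar d^j(x) + z\sigma^j(x)\}$ is a slightly sharper way to justify the Lipschitz-in-$x$ step than the paper's appeal to the motion of the hyperrectangle's vertices, but it is the same argument in substance.
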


Finally, we can show that, under the same Lipschitz assumptions on the Gaussian process prior, the disturbance bound $\Dx$ is Lipschitz continuous under the Hausdorff distance, which we required in Proposition \ref{prop:locally_correct}.

\begin{proposition}
Let the prior mean function $\mu^j$ be Lipschitz continuous, and the covariance kernel $k^j$ be jointly Lipschitz continuous in its two variables, for all components $j$ of the disturbance function $d(x)$. Then the set-valued map $\hat\D$ is Lipschitz continuous under the Hausdorff distance.

\begin{proof}
Since the disturbance set $\Dx$ given by \eqref{eq:Dx} is a hyperrectangle,
the Hausdorff distance between
the disturbance sets
$\hat\D(x_1)$ and $\hat\D(x_2)$ is upper-bounded by the maximum distance between any pair of corners:
\[
\delta(x_1,x_2):=d_H\big(\hat\D(x_1),\hat\D(x_2)\big) \le \max_{i} \max_{k} |c_i-c_k|
\enspace ,
\]
with $c_i$,$c_k$ used to enumerate all corners of each of the two hyperrectangles. For simplicity of exposition, we use the equivalence of all norms in $\RR^{d_n}$ to upper-bound the above, arbitrary norm in $\RR^{n_d}$, by the infinity norm, up to a constant factor $m$, which in combination with \eqref{eq:Dx} gives:
\[
\delta(x_1,x_2) \le m \cdot \max_{j} \big( |\bar{d^j}(x_1 ) - \bar{d^j}(x_2 )| + |z\sigma^j(x_1)-z\sigma^j(x_2)| \big)
\, .
\]
Now, by Proposition \ref{prop:Lipschitz}, $\bar{d}^j(x)$ and $\sigma^j(x)$ are Lipschitz continuous in $x$; let their respective constants be $L^j_\mu$ and $L^j_\sigma$. We then have
\[
d_H\big(\hat\D(x_1),\hat\D(x_2)\big) \le m \cdot\max_{j}  \big(L^j_\mu + z L^j_\sigma\big) |x_1 - x_2 |
\enspace ,
\]
which proves Hausdorff Lipschitz continuity of the set-valued map $\hat\D$, with a Lipschitz constant $L_{\hat\D}$ upper bounded by ${m\cdot\max_{j}  L^j_\mu + z L^j_\sigma}$.
\end{proof}
\end{proposition}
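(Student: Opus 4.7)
The plan is to exploit the hyperrectangle structure of $\hat\D(x)$ together with the Lipschitz continuity of the posterior mean and standard deviation that has already been established in Proposition \ref{prop:Lipschitz}.

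First I would observe that by \eqref{eq:Dx}, $\hat\D(x)$ is an axis-aligned box in $\RR^{n_d}$ whose $j$th side is the interval $[\bar d^j(x) - z\sigma^j(x),\, \bar d^j(x) + z\sigma^j(x)]$. For two axis-aligned boxes $A$ and $B$, the Hausdorff distance is controlled by the maximum displacement of corresponding vertices: projecting a point of $A$ onto $B$ amounts to coordinate-wise clamping, so the sup-sup infimum distance in each coordinate is no larger than the displacement of the corresponding face. Using equivalence of norms on the finite-dimensional space $\RR^{n_d}$, this bound on $d_H(\hat\D(x_1),\hat\D(x_2))$ can be written, up to a constant $m$, as $\max_j$ of the displacements of the endpoints $a_j^\pm(x) := \bar d^j(x)\pm z\sigma^j(x)$.

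Next, the triangle inequality gives $|a_j^\pm(x_1)-a_j^\pm(x_2)|\le |\bar d^j(x_1)-\bar d^j(x_2)| + z|\sigma^j(x_1)-\sigma^j(x_2)|$. Invoking Proposition \ref{prop:Lipschitz} to obtain Lipschitz constants $L_\mu^j,L_\sigma^j$ for the posterior mean and standard deviation, each summand collapses to a constant multiple of $|x_1-x_2|$. Taking the max over components and including the norm-equivalence constant yields a single global Lipschitz constant $L_{\hat\D}$ bounded above by $m\cdot\max_j(L_\mu^j + z L_\sigma^j)$, which is the desired conclusion.

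The main obstacle is the first step: relating the Hausdorff distance between two axis-aligned boxes to their vertex displacements. One must be careful because vertex-to-vertex distance does not in general coincide with Hausdorff distance; however, the coordinate-wise clamping argument for projecting onto axis-aligned boxes gives the needed inequality cleanly. A minor secondary concern is that the square root in the definition of $\sigma^j(x)$ is only Lipschitz away from zero, but this is handled by the observation (already used in Proposition \ref{prop:Lipschitz}) that nondegenerate Gaussian-process inference keeps the posterior variance strictly positive. Everything else is a mechanical composition of Lipschitz inequalities.
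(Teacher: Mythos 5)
Your proposal is correct and follows essentially the same route as the paper: bound the Hausdorff distance between the two hyperrectangles by vertex displacements, pass to the infinity norm via norm equivalence, and apply the Lipschitz constants of $\bar d^j$ and $\sigma^j$ from Proposition \ref{prop:Lipschitz} to the interval endpoints $\bar d^j(x)\pm z\sigma^j(x)$. If anything, your coordinate-wise clamping argument is slightly more careful than the paper's intermediate bound $\max_i\max_k|c_i-c_k|$ over \emph{all} pairs of corners, which read literally would not shrink as $x_1\to x_2$; the intended (and your) statement is the maximum displacement over \emph{corresponding} corners.
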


\printbibliography
\newpage

\begin{IEEEbiography}[{\includegraphics[width=1in,height=1.25in,clip,keepaspectratio]{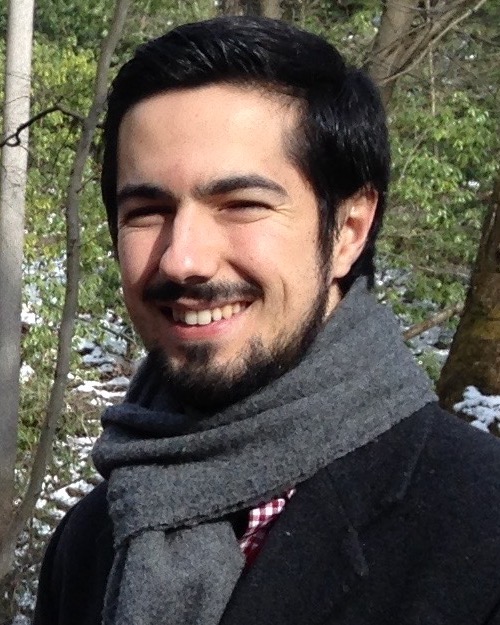}}]{Jaime F. Fisac} is a Ph.D. candidate in Electrical Engineering and Computer Sciences at the University of California, Berkeley. He received a B.S./M.S. degree in Electrical Engineering from the Universidad Polit{\'e}cnica de Madrid, Spain, in 2012, and a M.Sc. in Autonomous Vehicle Dynamics and Control from Cranfield University, UK, in 2013. He is a recipient of the ``la~Caixa'' Foundation Fellowship (2013-2015). His research interests lie in control theory, artificial intelligence, and cognitive science, with a focus on safety for robotic and AI systems operating closely with people.
\end{IEEEbiography}
\vspace{-1.32cm}
\begin{IEEEbiography}[{\includegraphics[width=1in,height=1.25in,clip,keepaspectratio]{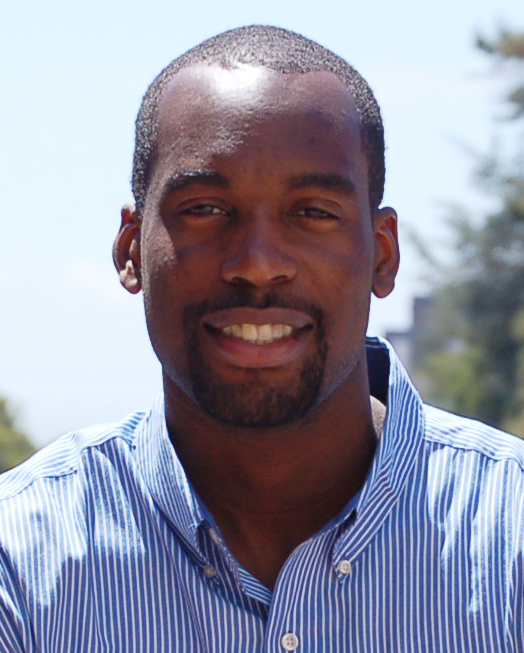}}]{Anayo K. Akametalu} is a PhD. candidate in Electrical Engineering and Computer Sciences at the University of California, Berkeley. He obtained his B.S. degree in Electrical Engineering from the University of California, Santa Barbara in 2012. His research interests lie at the intersection of control theory and reinforcement learning. He has been funded through the National Science Foundation Bridge to Doctorate Fellowship, UC Berkeley Chancellor's Fellowship, and GEM Fellowship. In 2016 he took a break from his graduate studies to work at Apple. \end{IEEEbiography}
\vspace{-1.32cm}
\begin{IEEEbiography}[{\includegraphics[width=1in,height=1.25in,clip,keepaspectratio]{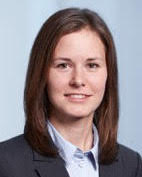}}]{Melanie N. Zeilinger} is an Assistant Professor at the Department of Mechanical and Process Engineering, ETH Zurich, Switzerland. She received a Diploma in Engineering Cybernetics (2006) from the University of Stuttgart, Germany, and a Ph.D. in Electrical Engineering (2011) from ETH Zurich. She was a Marie Curie Fellow and Postdoctoral Researcher at the Max Planck Institute for Intelligent Systems, Germany (until 2015), UC Berkeley, CA (2012-14), and EPFL, Switzerland (2011-12). Her research interests include distributed control and optimization, as well as safe learning-based control, with applications to human-in-the-loop systems.
\end{IEEEbiography}
\vspace{-1.32cm}
\begin{IEEEbiography}[{\includegraphics[width=1in,height=1.25in,clip,keepaspectratio]{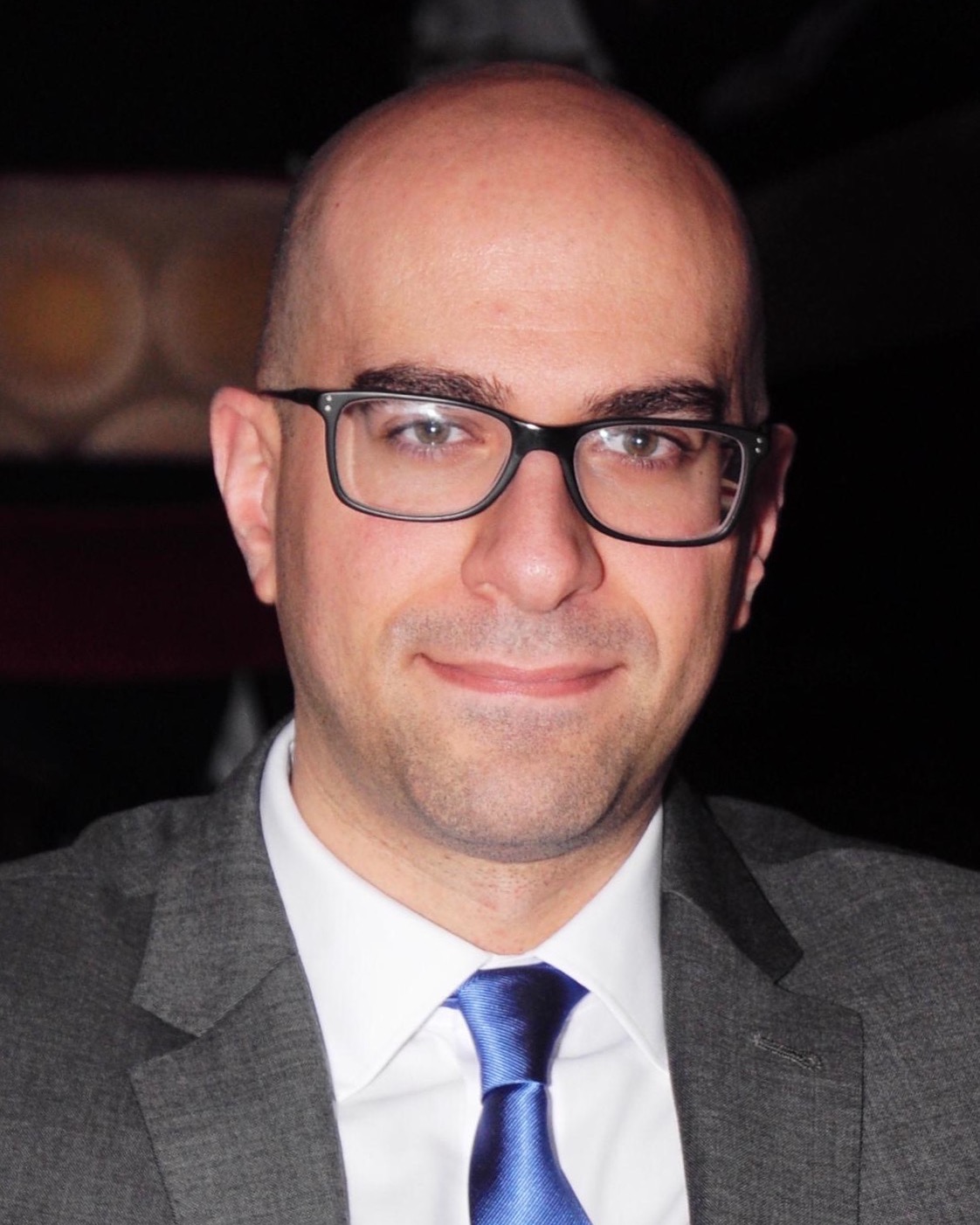}}]{Shahab Kaynama} is a Senior Algorithms Architect at Apple. Between 2014-2017 he was with Clearpath Robotics as an Autonomy Team Lead for Navigation and Controls, where he helped launch its two OTTO flagship products. He was a Postdoctoral Research Scholar at UC Berkeley (since 2012) and at the University of British Columbia (since 2014). Dr. Kaynama received a Ph.D. (2012) in Electrical and Computer Engineering from the University of British Columbia, Canada and an M.Sc. (2006) in Advanced Control and Systems Engineering from the University of Manchester, UK.
\end{IEEEbiography}
\vspace{-1.32cm}
\begin{IEEEbiography}[{\includegraphics[width=1in,height=1.25in,clip,keepaspectratio]{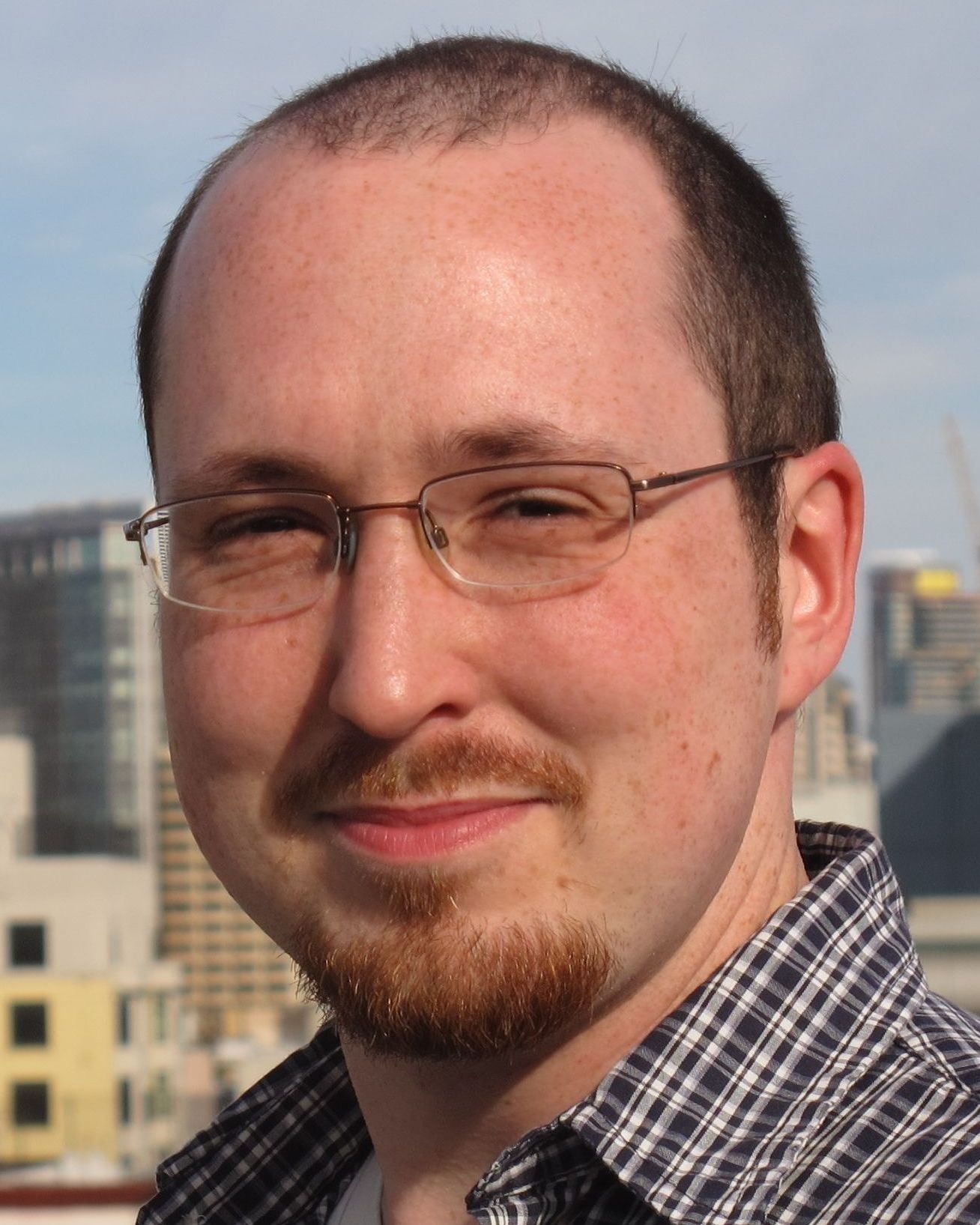}}]{Jeremy Gillula} received a B.S. in Computer Science from Caltech, CA, in 2006, and a M.S. and Ph.D. in Computer Science from Stanford University, CA, in 2011 and 2013. After spending eight months as a Postdoctoral Researcher at UC Berkeley, he became a Staff Technologist at the Electronic Frontier Foundation, a San Francisco-based nonprofit organization dedicated to defending civil liberties in the digital world.
\end{IEEEbiography}
\vspace{-1.32cm}
\begin{IEEEbiography}[{\includegraphics[width=1in,height=1.25in,clip,keepaspectratio]{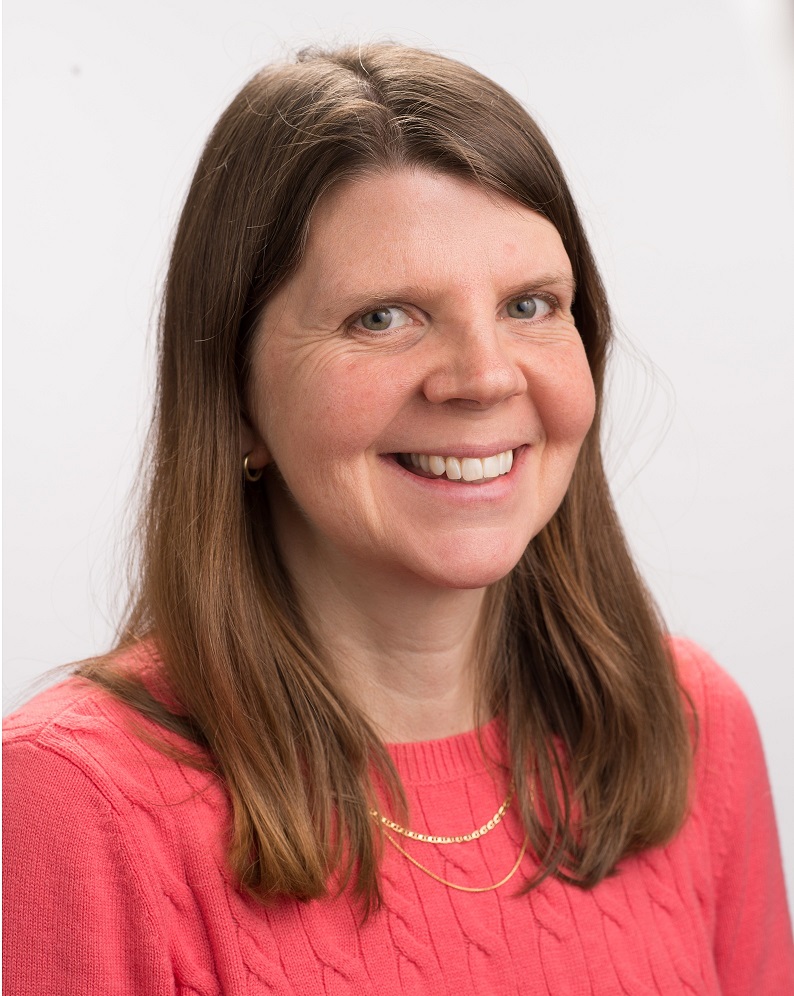}}]{Claire J. Tomlin} is the Charles A. Desoer Professor of Engineering in Electrical Engineering and Computer Sciences at the University of California, Berkeley. She was an Assistant, Associate, and Full Professor in Aeronautics and Astronautics at Stanford from 1998 to 2007, and in 2005 joined Berkeley. Claire works in the area of control theory and hybrid systems, with applications to air traffic management, UAV systems, energy, robotics, and systems biology. She is a MacArthur Foundation Fellow (2006) and an IEEE Fellow (2010), and in 2010 held the Tage Erlander Professorship of the Swedish Research Council at KTH in Stockholm.
\end{IEEEbiography}

\end{document}